\newtheorem{theorem}{Theorem}
\newtheorem{lemma}{Lemma}
\newtheorem{proper}{Property}
\newtheorem{assumption}{Assumption}
\newcommand{\R}{{\mathbb{R}}}
\newcommand{\E}{\mathbb{E}}
\begin{document}
%
\title{Byzantine-Robust Variance-Reduced Federated Learning over Distributed Non-i.i.d. Data}
%
%
%

	\author{Jie~Peng,
	Zhaoxian~Wu,
	Qing~Ling,
	and~Tianyi~Chen
	\thanks{Jie Peng, Zhaoxian Wu and Qing Ling are with the School of Computer Science and Engineering, the Guangdong Provincial Key Laboratory of Computational Science, Sun Yat-Sen University, Guangzhou 510006, China, and also with the Pazhou Lab, Guangzhou 510335, China. The work of Qing Ling is supported in part by National Natural Science Foundation of China under grant 61973324, Guangdong Basic and Applied Basic Research Foundation under grant 2021B1515020094, and Guangdong Provincial Key Laboratory of Computational Science, Sun Yat-Sen University under grant 2020B1212060032. (e-mail:
		pengj95@mail2.sysu.edu.cn; wuzhx23@mail2.sysu.edu.cn; lingqing556@mail.sysu.edu.cn).}
	\thanks{Tianyi Chen is with the Department of Electrical, Computer, and Systems
		Engineering, Rensselaer Polytechnic Institute, Troy, NY 12180 USA (e-mail:
		chent18@rpi.edu).}}

%
%

\markboth{IEEE Transactions on Cybernetics}%
{Peng \MakeLowercase{\textit{et al.}}: Byzantine-Robust Variance-Reduced Federated Learning over Distributed Non-i.i.d. Data}
%



\maketitle

\begin{abstract}
	We consider the federated learning problem where data on workers are not independent and identically distributed (i.i.d.). During the learning process, an unknown number of Byzantine workers may send malicious messages to the central node, leading to remarkable learning error. Most of the Byzantine-robust methods address this issue by using robust aggregation rules to aggregate the received messages, but rely on the assumption that all the regular workers have i.i.d. data, which is not the case in many federated learning applications. In light of the significance of reducing stochastic gradient noise for mitigating the effect of Byzantine attacks, we use a resampling strategy to reduce the impact of both inner variation (that describes the sample heterogeneity on every regular worker) and outer variation (that describes the sample heterogeneity among the regular workers), along with a stochastic average gradient algorithm to gradually eliminate the inner variation. The variance-reduced messages are then aggregated with a robust geometric median operator. We prove that the proposed method reaches a neighborhood of the optimal solution at a linear convergence rate and the learning error is determined by the number of Byzantine workers. Numerical experiments corroborate the theoretical results and show that the proposed method outperforms the state-of-the-arts in the non-i.i.d. setting.
\end{abstract}

\begin{IEEEkeywords}
Federated learning, distributed optimization, variance reduction, non-i.i.d. data, Byzantine attacks
\end{IEEEkeywords}

%
\IEEEpeerreviewmaketitle

\section{Introduction}

\IEEEPARstart{W}{ith} the rapid increase of data volume and computing power, the past decades have witnessed the explosive development of machine learning. However, many machine learning methods require a central server or a cloud to collect data from various owners and train models in a centralized manner, leading to serious privacy concerns. To address this issue, \textit{federated learning} keeps data private at their owners and carries out machine learning tasks locally \cite{konevcny2016federatedoptimization, mcmahan2016federated,yang2019federated,kairouz2019advances, park2019wireless, zeng2020federated}. Every data owner (called worker thereafter) performs local computation based on its local data and sends the results (such as local models, gradients, stochastic gradients, etc) to the central server. The central server (called central node thereafter) aggregates the results received from the workers and updates the global model.

Nevertheless, the distributed nature of federated learning makes it vulnerable to attacks \cite{kairouz2019advances}. Due to the heterogeneity of federated learning systems, workers are not all reliable. Some workers might be malfunctioning or even adversarial, and send malicious messages to the central node. This paper considers the classical Byzantine attack model \cite{LamportSP82}, where an unknown number of Byzantine workers are omniscient, collude with each other, and send arbitrary malicious messages. Moreover, the identities of Byzantine workers are unknown to the central node. Misled by the Byzantine workers, the aggregation at the central node is problematic, such that the federated learning method may converge to an unsatisfactory model or even diverge. For instance, the popular distributed stochastic gradient descent (SGD) method fails at presence of Byzantine attacks from a single worker \cite{geomed}.

Most of the Byzantine-robust distributed methods modify distributed SGD to handle Byzantine attacks, by replacing mean aggregation with robust aggregation at the central node \cite{yang2020adversary,xie2020fall,cao2019distributed}. When the data on the workers are independent and identically distributed (i.i.d.) and the local cost functions are in the same form, stochastic gradients computed at the same point are i.i.d. too. Therefore, various robust aggregation rules, such as geometric median \cite{geomed}, can be applied to alleviate the effect of statistically biased messages sent by Byzantine workers. Unfortunately, the data on the workers are often non-i.i.d. in federated learning applications \cite{zhao2018federated,li2020federated}. Thus, the messages sent by the regular workers are no longer i.i.d. such that handling Byzantine attacks becomes more challenging.

\subsection{Our contributions}

In the Byzantine-robust federated learning setting, this paper considers the distributed finite-sum minimization problem when the workers have non-i.i.d. data. Our contributions are summarized as follows.

\noindent \textbf{C1)} In light of the significance of reducing stochastic gradient noise for mitigating the effect of Byzantine attacks, we propose a Byzantine-robust variance-reduced SGD method, using a resampling strategy to reduce the impact of both inner variation (that describes the sample heterogeneity on every regular worker) and outer variation (that describes the sample heterogeneity among the regular workers), while a stochastic average gradient algorithm (SAGA) to fully eliminate the inner variation. The variance-reduced messages are then aggregated with a robust geometric median operator.

\noindent \textbf{C2}) We prove that the proposed method reaches a neighborhood of the optimal solution at a linear convergence rate, and the learning error is explicitly determined by the number of Byzantine workers.



\noindent \textbf{C3}) We conduct numerical experiments on convex and nonconvex federated learning problems and in i.i.d. and non-i.i.d. settings. The experimental results corroborate the theoretical findings and show that the proposed method outperforms the state-of-the-arts, especially in the non-i.i.d. setting.

\subsection{Related works}

Federated learning, in a nutshell, belongs to the category of distributed optimization, for which there are extensive works, ranging from machine learning, signal processing to system control. For example, \cite{zhu2018projected} proposes a projected primal-dual method to solve the distributed constrained nonsmooth convex optimization problem. Aiming at the distributed time-varying formation problem, \cite{sun2021time} proposes a distributed framework with the help of time-varying optimization methods. Below we focus on the related works in Byzantine-robustness of distributed algorithms, especially those handling non-i.i.d. data.

Byzantine-robust distributed machine learning has attracted much attention in recent years. In distributed SGD, the central node aggregates messages received from the workers by taking average and uses the mean as aggregated gradient direction. The mean aggregation, however, is vulnerable to Byzantine attacks. Existing Byzantine-robust distributed methods mostly extend the distributed SGD with robust aggregation rules, such as geometric median \cite{geomed}, normalized aggregation \cite{jin2019distributed}, coordinate-wise median \cite{cmctm}, coordinate-wise trimmed mean \cite{cmctm}, Krum \cite{krumm-krum}, multi-Krum \cite{krumm-krum}, Bulyan \cite{bulyan}, to name a few. Algorithms leveraging second-order information has also be considered in \cite{cao2020distributed}.
Another approach is to detect and discard outliers from the received messages \cite{rodriguez2020dynamic, azulay2020holdout, li2020learning}. When the received stochastic gradients from the regular workers satisfy the i.i.d. assumption, the statistically different malicious messages from the Byzantine workers can be detected and discarded, or their negative effect can be alleviated by robust aggregation rules. For network anomaly detection in distributed online optimization, \cite{miao2018distributed} proposes a distributed one-class support vector machine method.

Aiming at Byzantine-robustness with distributed non-i.i.d. data, \cite{RSA} proposes a robust stochastic aggregation method that employs model aggregation rather than stochastic gradient aggregation. Forced by the introduced consensus constraints, the regular workers and the central node shall reach consensus on their local models, no matter the local data are i.i.d. or not. \cite{dong2020communication} also imposes asymptotic consensus between the regular workers and the central node, and proposes a Byzantine-robust proximal stochastic gradient method. \cite{ghosh2019robust} divides the cost functions of the workers into several clusters, such that within each cluster the i.i.d. assumption approximately holds. Then robust aggregation can be applied within each cluster. \cite{he2020byzantine} introduces a resampling strategy to reduce the heterogeneity of the received messages in the non-i.i.d. setting.

In both i.i.d. and non-i.i.d. settings, variance of the messages sent by the regular workers plays a critical role to Byzantine-robustness. Larger variance means that the malicious messages are harder to distinguish. Theoretically, the variance can be classified into inner variation that describes the sample heterogeneity on every regular worker and outer variation that describes the sample heterogeneity among the regular workers. In the i.i.d. setting the inner variation often dominates, while in the non-i.i.d. setting the outer variation can be large. \cite{byrdsaga} and its extended version \cite{byrdsaga-arxiv} use SAGA to correct the stochastic gradients and aggregate the corrected stochastic gradients with geometric median. It has been proven that the impact of inner variation is fully eliminated from the learning error. \cite{khanduri2019byzantine} combines stochastic variance-reduced gradient (SVRG) with robust aggregation to solve distributed non-convex problems. \cite{el2020distributed} proves that the momentum method can reduce the variance of the stochastic gradients for the regular workers relative to their norm, and is thus helpful to Byzantine-robustness. \cite{karimireddy2020learning} also claims that applying the momentum method can reduce the variance and enhance Byzantine-robustness. The resampling strategy used in \cite{he2020byzantine} is able to reduce the impact of inner and outer variations simultaneously.

The work of this paper is closely related to \cite{byrdsaga-arxiv} and \cite{he2020byzantine}, but not a simple combination of them. First, the theme of \cite{byrdsaga-arxiv} is to investigate the importance of variance reduction to Byzantine-robustness for i.i.d. data, while this work considers non-i.i.d. data. \cite{he2020byzantine} studies reducing the impact of inner and outer variations by resampling, but is unable to eliminate the impact of inner variation. Second, the theoretical analysis is more difficult than that in \cite{byrdsaga-arxiv} and \cite{he2020byzantine} because of the entanglement of variance reduction and resampling. We will highlight the differences of our lemmas and theorems in Section \ref{sec:theory}.

\section{Problem Formulation and Preliminaries}
Consider a federated learning system consisting of one central node and $W:= R+B$ workers, among which $R$ workers are regular and $B$ workers are Byzantine. However, the numbers and identities of Byzantine workers are unknown to the central node. During the learning process, the Byzantine workers can collude to send arbitrary malicious messages to the central node \cite{LamportSP82}. Denote the sets of regular and Byzantine workers as $\mathcal{R}$ and $\mathcal{B}$, respectively, with $R = |\mathcal{R}|$ and $B = |\mathcal{B}|$. Denote $\mathcal{W}:= \mathcal{R} \cup \mathcal{B} = \{1, \cdots, W\}$ as the set of all workers. The goal is to find an optimal solution to the finite-sum minimization problem
\begin{align}\label{eq1}
	x^* =  \arg\min_{x} f(x) :=\frac{1}{R} \sum_{w \in \mathcal{R}} f_w(x),
\end{align}
with
\begin{align}\label{eq2}
	f_w(x) := \frac{1}{J} \sum_{j=1}^J f_{w,j}(x).
\end{align}
Here $x  \in \mathbb{R}^p$ is the optimization variable, and $f_w(x)$ is the local cost function of regular worker $w$ averaging the costs $f_{w,j}(x)$ of $J$ samples. The samples are not necessarily i.i.d. across the regular workers. This finite-sum minimization form arises in many federated learning applications \cite{yang2019federated}.


When the Byzantine workers are absent, distributed SGD \cite{sgd} is a popular method to solve \eqref{eq1}. The update of distributed SGD is given by
\begin{align}\label{sgd}
	x^{k+1} = x^{k} - \gamma \cdot \frac{1}{W}\sum_{w=1}^W f'_{w, i_w^k} (x^k),
\end{align}
where $\gamma$ is the step size and $i_w^k$ is the sample index chosen by worker $w$ at time $k$. Upon receiving $x^k$ from the central node, every worker $w$ computes a stochastic gradient $f'_{w, i_w^k} (x^k)$ and returns it to the central node. The central node then averages the received stochastic gradients and updates $x^{k+1}$. However, distributed SGD is vulnerable to Byzantine attacks. Even there is only one Byzantine worker, it can replace the true stochastic gradient with a malicious message, such that the average at the central node becomes zero or infinitely large \cite{yang2020adversary}.

\noindent\textbf{Geometric median.} 	To address this issue, most of the Byzantine-robust methods aggregate the received messages with robust aggregation rules. One popular approach is geometric median \cite{geomed}. To be specific, define $g_w^k$ as the message sent by worker $w$ at time $k$ to the central node, given by
\begin{align}\label{ggg}
	g_w^k =
	\left\{
	\begin{aligned}[]
		&f'_{w, i_w^k} (x^k),  && w \in \mathcal{R}, \\
		&*, && w \in \mathcal{B}.
	\end{aligned}
	\right.
\end{align}
When $w$ is a regular worker, it sends the true stochastic gradient $f'_{w, i_w^k} (x^k)$. Otherwise when $w$ is a Byzantine worker, it sends an arbitrary message denoted by $*$. Upon receiving all the messages, the central node calculates the geometric median of $\{g_w^k, w \in \mathcal{W}\}$, which is
\begin{align}\label{eq:geomed}
	\mathrm{geomed}\left(\{g_w^k, w \in \mathcal{W}\}\right) := \arg\min_{g}\sum_{w=1}^{W} \|g - g_w^k\|_2.
\end{align}
Then, the central node updates $x^{k+1}$ by replacing the average in \eqref{sgd} with the geometric median, as
\begin{align}\label{eq:geomed_update}
	x^{k+1} = x^{k} - \gamma \cdot \mathrm{geomed}\left(\{g_w^k, w \in \mathcal{W}\}\right).
\end{align}
It has been proved that when the regular workers have i.i.d. data, incorporating distributed SGD and geometric median allows us to tolerate Byzantine attacks when less than half of the workers are malicious, namely, $B < \frac{W}{2}$ \cite{geomed}. However, when the regular workers have non-i.i.d. data, simply combining geometric median with distributed SGD is unable to defend against Byzantine attacks \cite{RSA}.

\noindent \textbf{Why large variance hurts?} To understand the importance of variance reduction to robust aggregation, we review the following concentration property of geometric median from \cite[Lemma 1]{byrdsaga-arxiv}. 

\begin{proper} \label{lemma:geometric_error}
	Let $\{z_w, w\in\mathcal{W}\}$ be a set of random vectors. It holds when $B < \frac{W}{2}$ that
	\begin{align} \label{inequality:geovoting-error-exp}
		&\E \|\mathrm{geomed}\left(\{z_w, w\in\mathcal{W}\} \right) - \bar{z}\|^2    \\
		\le & \frac{C_\alpha^2}{R} \sum_{w\in\mathcal{R}}{ \E\|z_w-\E z_w\|^2 }
		+ \frac{C_\alpha^2}{R} \sum_{w\in\mathcal{R}}{ \|\E z_w- \bar{z}\|^2}, \nonumber
	\end{align}
	where $\bar{z}:=\frac {1} {R}\sum_{w\in\mathcal{R}} \E z_w$, $\alpha :=\frac{B}{W}$, $C_\alpha :=\frac{2-2\alpha}{1-2\alpha}$, and $\E$ is taken over the random vectors.
\end{proper}

Suppose $z_w = g_w^k$ as defined \eqref{ggg}, such that $z_w$ is the true stochastic gradient when $w$ is regular, and an arbitrary malicious message when $w$ is Byzantine. The left-hand side of \eqref{inequality:geovoting-error-exp} denotes the mean-square error of the geometric median relative to the average of the true stochastic gradients. The right-hand side of \eqref{inequality:geovoting-error-exp} contains two terms. The first term now refers to the cross-sample variance (a.k.a. inner variation), while the second term refers to the cross-worker variance (a.k.a. outer variation). When either the inner variation or the outer variation is large, the geometric median aggregation yields a poor direction that eventually leads to large learning error, as illustrated in Fig. \ref{Fig:variation}. This fact motivates Byrd-SAGA \cite{byrdsaga,byrdsaga-arxiv}. It has been shown in \cite{byrdsaga,byrdsaga-arxiv} that SAGA effectively eliminates the inner variation, i.e., the first term at the right-hand side of \eqref{inequality:geovoting-error-exp}. However, for the non-i.i.d. case, the second term at the right-hand side of \eqref{inequality:geovoting-error-exp} can be large and still deteriorate the performance of robust aggregation.

\begin{figure}
	\includegraphics[scale=0.45]{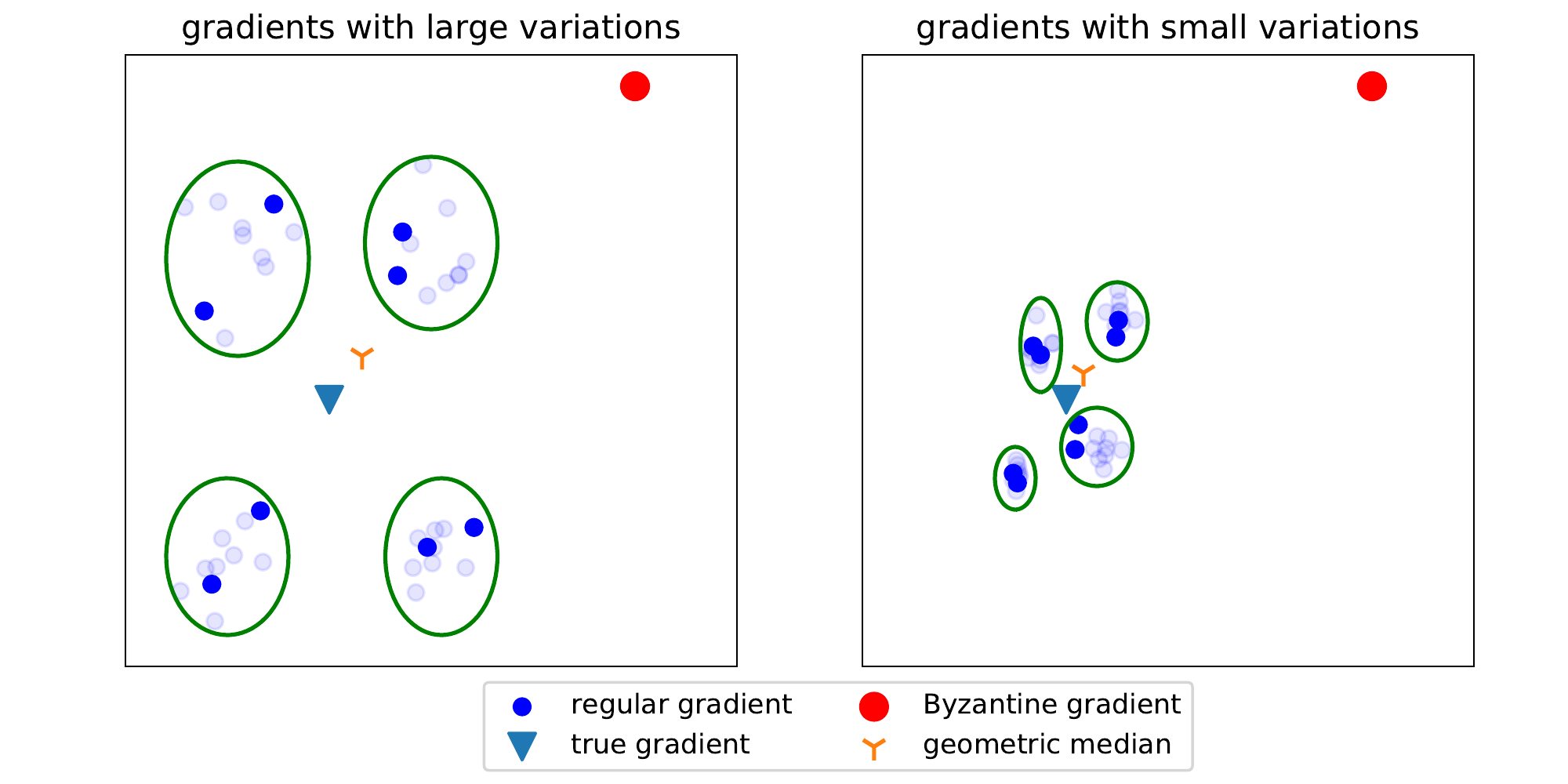}
	\caption{An illustration of the impact of inner and outer variations on geometric median under Byzantine attacks. Suppose 8 regular workers sample stochastic gradients from 4 distributions. When the variations are large, the Byzantine-free true average is far away from the geometric median under Byzantine attacks. In contrast, when the variations are small, the gap is small too.}
	\label{Fig:variation}
\end{figure}

\section{Algorithm Development}

In light of the fact that the messages with large variance affect the robust aggregation performance, we propose to \emph{simultaneously reduce the inner and outer variations} so as to achieve better performance in the non-i.i.d. setting. Specifically, we use the variance-reduced stochastic gradients at the worker side to gradually eliminate the inner variation, and use a resample-then-aggregate strategy at the central node to reduce outer variation.

Our proposed method is described as follows. At time $k$, the central node broadcasts its current variable $x^k$ to all the workers. Instead of sending the stochastic gradients to the central node, the regular workers send the corrected, variance-reduced stochastic gradients. To be specific, every regular worker stores the most recent stochastic gradient for every of its samples. When regular worker $w$ selects the sample index $i_w^k$ at time $k$, the corrected stochastic gradient is
\begin{align} \label{eq:saga}
	f'_{w, i_w^k}(x^k) - f'_{w, i_w^k}(\phi_{w, i_w^k}^k) + \frac{1}{J} \sum_{j=1}^J  f'_{w, j}(\phi_{w, j}^k),
\end{align}
where
\begin{align}
	\label{rule:phi}
	\phi_{w, j}^{k+1} =
	\left\{
	\begin{aligned}
		&\phi_{w, j}^{k}, && j \neq i_w^k, \\
		&x^k, && j = i_w^k.
	\end{aligned}
	\right.
\end{align}
That is, the stochastic gradient $f'_{w, i_w^k}(x^k)$ is corrected by first subtracting the previously stored stochastic gradient $f'_{w, i_w^k}(\phi_{w, i_w^k}^k)$ for sample $i_w^k$, and then adding the average of all the stored stochastic gradients $f'_{w, j}(\phi_{w, j}^k)$. At time $k$, denote $v_w^k$ as the corrected stochastic gradient if $w$ is regular and an arbitrary message $*$ if $w$ is Byzantine, given by
\begin{align}\label{vvv}
	\!\!\!\!	v_w^k =
	\left\{
	\begin{aligned}
		&f'_{w, i_w^k}(x^k) - f'_{w, i_w^k}(\phi_{w, i_w^k}^k) && \\
		&\hspace{3em}+ \frac{1}{J} \sum_{j=1}^J  f'_{w, j}(\phi_{w, j}^k), && w \in \mathcal{R}, \\
		&*, && w \in \mathcal{B}.
	\end{aligned}
	\right.
\end{align}

\begin{algorithm}[t]
	\small
	\caption{Resampling $\mathbf{RS}\left(\{v_w^k, w \in \mathcal{W}\}, s\right)$ }
	\label{algorithm:resample}
	{\textbf{Input:}} variance-reduced worker gradients $\{v_w^k, w \in \mathcal{W}\}$, $s$
	\begin{algorithmic}[1]
		\State Initialize $\{c_w=0, w \in \mathcal{W}\}$
		\For{$w = 1, \cdots$, $W$}
		\For{$\zeta=1, \cdots, s$}
		\State Select $w_\zeta \sim \textbf{Uniform}( \{w' | w' \in \mathcal{W}, c_{w'} < s \} )$
		\State $c_{w_\zeta} = c_{w_\zeta} +1$
		\EndFor
		\State Compute average $\tilde{v}_w^k = \frac{1}{s} \sum_{\zeta=1}^s v_{w_\zeta}^k$
		\EndFor
		\State Return $\{\tilde{v}_w^k, w \in \mathcal{W}\}$
	\end{algorithmic}
\end{algorithm}

Building upon \cite{he2020byzantine}, we leverage a resample-then-aggregate strategy to reduce the outer variation in the non-i.i.d. case. After receiving the messages $\{v_w^k, w \in \mathcal{W}\}$ from all the workers, the central node takes $W$ rounds of sampling, and uniformly at random samples $s$ messages at every round. The $W$ rounds of sampling follows sampling with $s$-replacement mechanism: when one message has been sampled for $s$ times, it will not be sampled again. At the end of each round $w$, the central node averages the $s$ sampled messages to calculate the new message $\tilde{v}_{w}^k$. See a summary in Algorithm 1.

We denote this resampling procedure with $s$-replacement as
\begin{align}\label{resampling}
	\{\tilde{v}_w^k, w \in \mathcal{W}\} = \mathbf{RS}\left(\{v_w^k, w \in \mathcal{W}\}, s\right).
\end{align}
Compared to the original messages $\{v_w^k, w \in \mathcal{W}\}$, the new messages $\{\tilde{v}_w^k, w \in \mathcal{W}\}$ have smaller variance due to the averaging step. In addition, at most $sB$ new messages are contaminated by the malicious messages $\{v_w^k, w \in \mathcal{B}\}$, and the variance of uncontaminated new messages is also smaller than that of $\{v_w^k, w \in \mathcal{R}\}$. Fig. \ref{Fig:resampling} gives a deterministic example, showing that resampling indeed reduces the outer variation. For the stochastic case, the inner variation is reduced too.

\begin{figure}
	\centering
	\includegraphics[scale=0.35]{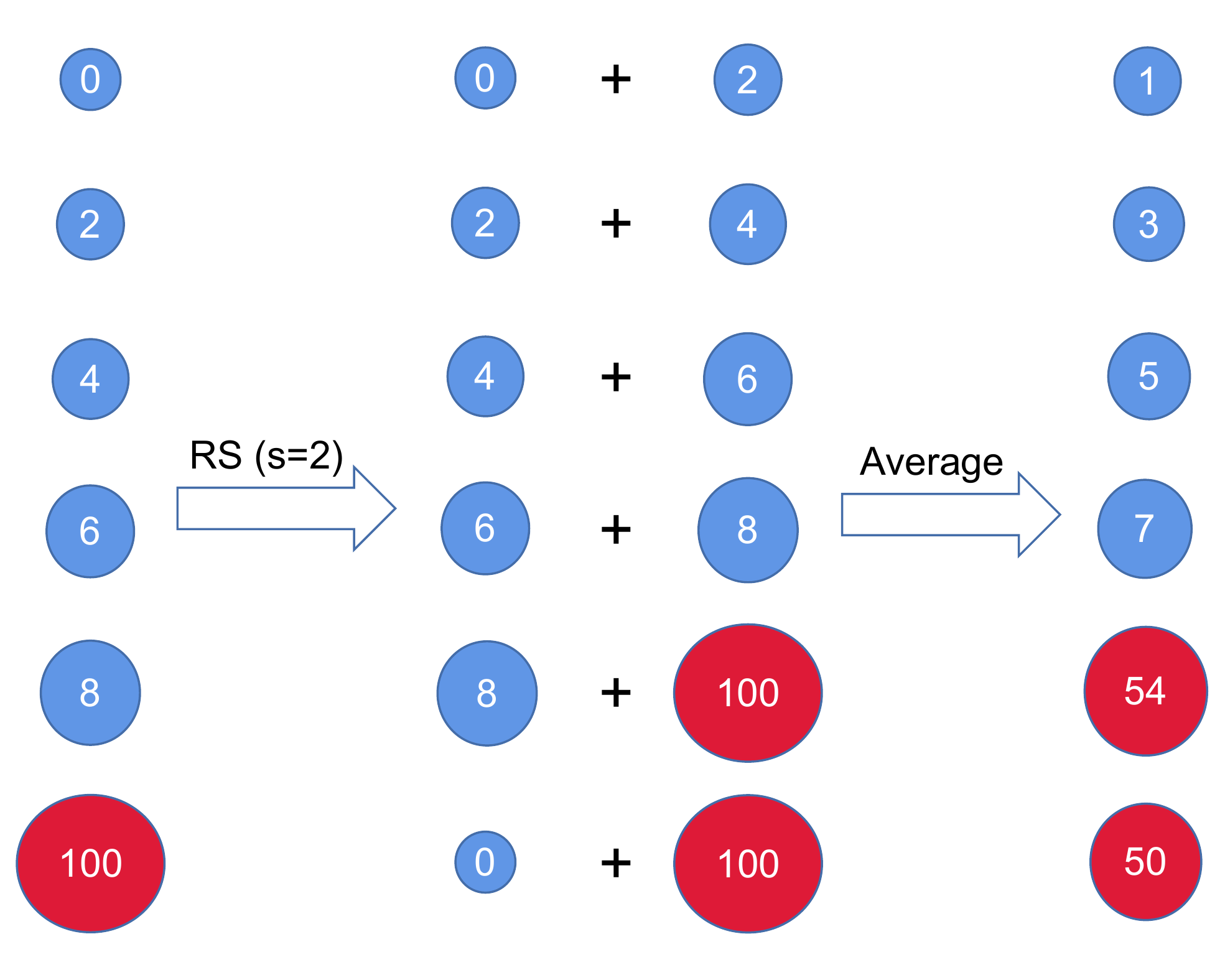}
	\caption{An illustration the variance-reducing effect of resampling, where 5 true messages (marked in blue) and 1 malicious message (marked in red) enters the resampling procedure with $s=2$. After the averaging step, at most 2 messages are contaminated. The variation of all the 6 original messages is $3860/3$, and that of all the 6 messages after resampling is $1550/3$. In addition, the variation of all the 5 original true messages is 8, and that of all the 4 uncontaminated messages after resampling is 5.}
	\label{Fig:resampling}
\end{figure}

Finally, the central node updates the variable from
\begin{align}\label{aggregation}
	x^{k+1} = x^k - \gamma \cdot \mathrm{geomed}\left(\{\tilde{v}_w^k, w \in \mathcal{W}\}\right).
\end{align}
See a summary of our proposed method in Algorithm 2.

\begin{algorithm}[t]
	\small
	\caption{Byzantine-Robust Variance-Reduced Federated Learning over Distributed Non-i.i.d. Data}
	\label{algorithm:RS-Byrd-SAGA}
	\centering \textbf{central node}:
	\begin{algorithmic}[1]
		\State Input $x^0 \in \mathbb{R}^p$, $\gamma$, $s$. At time $k$:
		\State Broadcast its current variable $x^k$ to all workers
		\State Receive messages $\{v_w^k, w \in \mathcal{W}\}$ from all workers
		\State Generate $\{\tilde{v}_w^k, w \in \mathcal{W}\}$ from $\{v_w^k, w \in \mathcal{W}\}$ according to Algorithm \ref{algorithm:resample} with parameter $s$
		\State Update $x^{k+1} = x^k - \gamma \cdot \mathrm{geomed}\left(\{\tilde{v}_w^k, w \in \mathcal{W}\}\right)$
	\end{algorithmic}
	\vspace*{1\baselineskip}
	\centering\textbf{Regular worker $w$}:
	\begin{algorithmic}[1]
		\State Initialize $\{f'_{w, j}(\phi_{w, j}^0) = f'_{w,j}(x^0), ~ j \in \{1, \cdots, J\} \}$. At time $k$:
		\State Receive variable $x^k$ from central node
		\State Compute $\bar{g}_w^k = \frac{1}{J} \sum_{j=1}^J f'_{w, j}(\phi_{w, j}^k)$.
		\State Sample $i_w^k$ uniformly randomly from $j \in \{1, \cdots, J\}$
		\State Update $v_w^k = f'_{w, i_w^k}(x^k) -  f'_{w, i_w^k}(\phi_{w, i_w^k}^k) + \bar{g}_w^k$
		\State Store stochastic gradient $f'_{w, i_w^k}(\phi_{w, i_w^k}^k) = f'_{w, i_w^k}(x^k)$
		\State Send corrected stochastic gradient $v_w^k$ to central node
	\end{algorithmic}
\end{algorithm}

\section{Theoretical Analysis}
\label{sec:theory}
In this section, we theoretically analyze the performance of the proposed method, as well as investigate the effect of reducing the inner and outer variations on robust aggregation.

\subsection{Assumptions}

We start from introducing three assumptions on the sample costs $\{f_{w,j}, w \in \mathcal{R}, j = 1, \cdots, J\}$.
\begin{assumption}\label{assumption1}
	(Strong convexity and Lipschitz continuous gradients) The function $f$ is $\mu$-strong convex and has $L$-Lipschitz continuous gradients. Namely, for any $x,y \in \R^p$, it holds that
	\begin{align}
		f(y) \geq f(x) + \langle f'(x), y - x\rangle + \frac{\mu}{2} \|y-x\|^2,
	\end{align}
	and
	\begin{align}
		\|f'(y) - f'(x)\| \leq L \| y - x\|.
	\end{align}
\end{assumption}

\begin{assumption}\label{assumption2}
	(Bounded outer variation) For any $x \in \R^p$, the variation of the local gradients at the regular workers with respect to the global gradient is upper-bounded by
	\begin{align}
		\frac{1}{R} \sum_{w \in \mathcal{R}}\| f'_w(x) - f'(x) \|^2 \leq \delta^2.
	\end{align}
\end{assumption}

\begin{assumption}\label{assumption3}
	(Bounded inner variation) For every regular worker $w \in \mathcal{R}$ and any $x \in \R^p$, the variation of its stochastic gradients with respect to its local gradient is upper-bounded by
	\begin{align}
		\E \|f'_{w, i_w^k}(x) - f'_w(x)\|^2 \leq \sigma^2, ~ \forall w \in \mathcal{R}.
	\end{align}
	Here we use $\E$ to denote the expectation with respect to all random variables $i_w^k$.
\end{assumption}

Assumption \ref{assumption1} is standard for analyzing distributed learning algorithms. Assumptions \ref{assumption2} and \ref{assumption3} bound the inner variation that describes the sample heterogeneity on every regular worker and the outer variation that describes the sample heterogeneity among the regular workers, respectively. When all the samples at the regular workers are identical, both the inner variation and the outer variation are zero. When the distributed data are i.i.d. the inner variation is often larger than the outer variation, while for the non-i.i.d. case the outer variation usually dominates.


\subsection{Concentration properties}
\label{sec:concentration}
Interestingly, augmented with the resampling strategy, geometric median shows better dependency on both the inner variation and the outer variation than that without resampling (as shown in Property 1). We give the following lemma.

\begin{lemma}
	\label{lemma:resampling-geomed}
	Let $\{z_w, w\in \mathcal{W}\}$ be a subset of random vectors distributed in a normed vector space and the random vectors in $\{z_w, w\in \mathcal{R}\}$ are independent. Generate from $\{z_w, w\in \mathcal{W}\}$ a new set $\{\tilde{z}_w, w\in \mathcal{W}\}$ using the resampling strategy with $s$-replacement. It holds when $B < \frac{W}{2s}$ that
	\begin{align}\label{lm1-1}
		&\E \| \mathrm{geomed}\left(\{\tilde{z}_w, w\in\mathcal{W}\} \right) - \bar{z}\|^2   \\
		\leq & \left(d+\frac{1-d}{R}\right)\frac{C_{s\alpha}^2}{R} \sum_{w\in\mathcal{R}}{ \E\|z_w-\E z_w\|^2 } \nonumber \\
		&+ \frac{dC_{s\alpha}^2}{R} \sum_{w\in\mathcal{R}}{ \|\E z_w- \bar{z}\|^2},  \nonumber
	\end{align}
	where $\bar{z}:=\frac {1} {R}\sum_{w\in\mathcal{R}} \E z_w$, $\alpha :=\frac{B}{W}$, $C_{s\alpha}:= \frac{2-2s\alpha}{1-2s\alpha}$, $d: = \frac{W-1}{s W - 1}$, and $\E$ is taken over the random vectors and the resampling process.
\end{lemma}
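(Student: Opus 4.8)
The plan is to reduce the statement to Property~\ref{lemma:geometric_error} by feeding the resampled collection $\{\tilde{z}_w, w\in\mathcal{W}\}$ into the geometric-median concentration bound, and then to quantify separately how the $s$-replacement averaging shrinks the inner and outer variations. The first step is a counting argument on contamination. Because Algorithm~\ref{algorithm:resample} draws every original message exactly $s$ times (the total number of draws $sW$ equals the capacity $sW$), the $B$ Byzantine messages occupy at most $sB$ of the $W$ averaged outputs, so at least $W-sB$ of the $\tilde{z}_w$ are ``clean'' averages built solely from independent regular vectors. The hypothesis $B<\frac{W}{2s}$ is exactly what forces the clean outputs into a strict majority, so that Property~\ref{lemma:geometric_error} applies to $\{\tilde{z}_w\}$ with effective Byzantine fraction at most $s\alpha$, producing the constant $C_{s\alpha}$ in place of $C_\alpha$.

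The second step is to express the two variation terms of the clean outputs through the original ones. Using the bias--variance decomposition, the right-hand side produced by Property~\ref{lemma:geometric_error} is a sum of $\E\|\tilde{z}_w-\bar{z}\|^2$ over clean indices. For a clean output $\tilde{z}_w=\frac{1}{s}\sum_{\zeta=1}^{s} z_{w_\zeta}$, independence of the regular vectors makes the stochastic (inner) fluctuation average down by a factor of order $1/s$, while the finite-population structure of the $s$-replacement draw contracts the deterministic (outer) spread of the means $\E z_{w}$ around $\bar{z}$ by the factor $d=\frac{W-1}{sW-1}$. Carrying both effects through and summing over all outputs yields the inner-variation multiplier $d+\frac{1-d}{R}$ and the outer-variation multiplier $d$. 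A useful consistency check guides the constant bookkeeping: at $s=1$ one has $d=1$, $C_{s\alpha}=C_\alpha$, and the inner multiplier $d+\frac{1-d}{R}=1$, so the claimed inequality collapses exactly to Property~\ref{lemma:geometric_error}.

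The delicate part, and where I expect the main obstacle to lie, is that the clean outputs are \emph{neither} independent (two averages may share a summand) \emph{nor} a fixed set (which outputs are clean is itself random through the draw). Hence Property~\ref{lemma:geometric_error}'s independence hypothesis cannot be invoked verbatim. I would instead condition on the resampling assignment, apply the geometric-median concentration conditionally on that assignment, and then take expectation over the draw, computing the exact first and second moments of $\frac{1}{s}\sum_{\zeta} z_{w_\zeta}$ under the $s$-replacement scheme. This forces one to evaluate the within-round cross-covariances that generate the finite-population correction and to confirm, by the symmetry of the uniform draw, that the conditional centers average back to $\bar{z}$ so that the two variation terms recombine cleanly. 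Pinning down precisely the factor $d=\frac{W-1}{sW-1}$ from this combinatorial second-moment computation, while simultaneously tracking the inner-variation reduction, is exactly the entanglement of variance reduction and resampling that makes the argument harder than either ingredient alone, and it is the step I would budget the most care for.
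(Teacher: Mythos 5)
Your plan follows essentially the same route as the paper: count that at most $sB$ of the resampled outputs can be contaminated so that $B<\frac{W}{2s}$ leaves a strict majority of clean averages, apply the geometric-median concentration with the effective fraction $s\alpha$ to get $C_{s\alpha}$, and then compute the exact first and second moments of the clean outputs under the $s$-replacement draw to obtain the multipliers $d+\frac{1-d}{R}$ and $d$ (the paper does this moment computation in Lemma \ref{lemma:resampling-inner-outer-variation}, importing the resampling moments from \cite[Proposition 1]{he2020byzantine} rather than re-deriving the within-round cross-covariances). The one point where your framing is off is the claimed obstacle that Property \ref{lemma:geometric_error}'s ``independence hypothesis cannot be invoked verbatim'': that property, and the underlying bound $\E\|\mathrm{geomed}(\{\tilde z_w\})-\bar z\|^2\le \frac{C_{s\alpha}^2}{|\mathcal{R}'|}\sum_{w'\in\mathcal{R}'}\E\|\tilde z_{w'}-\bar z\|^2$, is a pointwise robustness statement that assumes nothing about dependence among the inputs, so the correlations among the $\tilde z_{w'}$ are harmless at that step. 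Independence of the original $\{z_w,\,w\in\mathcal{R}\}$ enters exactly once, in the moment bookkeeping, to reduce $\E\|\frac{1}{R}\sum_{w\in\mathcal{R}}(z_w-\E z_w)\|^2$ to $\frac{1}{R^2}\sum_{w\in\mathcal{R}}\E\|z_w-\E z_w\|^2$, which is precisely where the $\frac{1-d}{R}$ portion of the inner multiplier comes from; your conditioning-on-the-assignment workaround would deliver the same computation, just less directly.
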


Comparing the bounds of mean-square errors given by \eqref{inequality:geovoting-error-exp} and \eqref{lm1-1}, we observe that the coefficients change. When $s=1$, the two bounds are identical. When $s > 1$, the coefficients in \eqref{lm1-1} smaller than those in \eqref{inequality:geovoting-error-exp} if $\alpha$ is sufficiently small, meaning that the inner variation and the outer variation are simultaneously reduced. The cost, however, is the capability of tolerating a smaller fraction of Byzantine workers. We will numerically depict the effect of variance reduction in Fig. \ref{Fig:AsymptoticLearningError} when discussing the main theorem.


Again, suppose $z_w = g_w^k$ as defined in \eqref{ggg}. Then, the right-hand side of \eqref{lm1-1} contains two terms, one for inner variation and another for outer variation.
Resampling can reduce the dependency on both the inner and outer variations, but cannot eliminate the inner variation.
We shall see in the main theorem that, the unique coupling of SAGA and resampling in our proposed method is able to eliminate the inner variation, and hence leads to reduced learning error.

\noindent \textbf{Remark 1}.
Above, we have discussed the connection between Lemma \ref{lemma:resampling-geomed} and \cite[Lemma 1]{byrdsaga-arxiv}. Technically, to prove Lemma \ref{lemma:resampling-geomed}, we need to apply \cite[Proposition 1]{he2020byzantine} to guarantee the reduction of upper bound for the mean-square error as in \eqref{lm1-1}. We also utilize the fact that the random vectors are independent to obtain a tight bound.


The intermediate result of Lemma \ref{lemma:resampling-geomed} is the following lemma, which characterizes how the averaged mean-square error of a number of vectors depends on the inner and outer variations after resampling.

\begin{lemma}
	\label{lemma:resampling-inner-outer-variation}
	Let $\{z_w, w\in \mathcal{W}\}$ be a subset of random vectors distributed in a normed vector space and the random vectors in $\{z_w, w\in \mathcal{R}\}$ are independent. Generate from $\{z_w, w\in \mathcal{W}\}$ a new set $\{\tilde{z}_w, w\in \mathcal{W}\}$ using the resampling strategy with $s$-replacement. When $B < \frac{W}{s}$, there exists a set $\mathcal{R}' \subseteq \mathcal{W}$ with at least $W-sB$ elements, such that for any $w' \in \mathcal{R}'$, it holds that
	\begin{align} \label{lemma:resampling-inner-outer-variation-inequatlity}
		&\frac{1}{|\mathcal{R'}|}\sum_{w'\in\mathcal{R}'} \E \|\tilde{z}_{w'}-\bar z\|^2  \\
		= & \left(d+\frac{1-d}{R}\right)\frac{1}{R}\sum_{w\in\mathcal{R}}
		\E \left\|z_w - \E z_w\right\|^2 \nonumber\\
		&+ \frac{d}{R}\sum_{w\in\mathcal{R}} \left\|\E z_w - \bar z\right\|^2,\nonumber
	\end{align}
	where $\bar{z}:=\frac {1} {R}\sum_{w\in\mathcal{R}} \E z_w$, $d:=\frac{W-1}{sW-1}$, and $\E$ is taken over all the randomness.
\end{lemma}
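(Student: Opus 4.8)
My plan is to track the resampling through its multiplicities and then split inner from outer variation using independence. First I would write each resampled vector as $\tilde z_{w'}=\frac1s\sum_{w\in\mathcal W} n_{w',w}\,z_w$, where $n_{w',w}\ge 0$ records how many of the $s$ draws forming $\tilde z_{w'}$ landed on $z_w$. By the $s$-replacement mechanism every original is consumed exactly $s$ times and every output receives exactly $s$ draws, so each row sum $\sum_{w} n_{w',w}$ and each column sum $\sum_{w'} n_{w',w}$ equals $s$. Since the Byzantine workers contribute at most $sB$ draws in total and any contaminated output uses at least one of them, the set $\mathcal R'$ of outputs built purely from regular messages satisfies $|\mathcal R'|\ge W-sB$, which establishes the claimed cardinality.

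Next, for $w'\in\mathcal R'$ I would use $\frac1s\sum_{w\in\mathcal R} n_{w',w}=1$ to write $\tilde z_{w'}-\bar z=\frac1s\sum_{w\in\mathcal R} n_{w',w}(z_w-\E z_w)+\frac1s\sum_{w\in\mathcal R} n_{w',w}(\E z_w-\bar z)$. Conditioning on the resampling pattern and taking expectation over the random vectors, the independence of $\{z_w\}_{w\in\mathcal R}$ forces every cross term between distinct workers to vanish, so the zero-mean part contributes exactly $\frac1{s^2}\sum_{w\in\mathcal R} n_{w',w}^2\,\E\|z_w-\E z_w\|^2$ and the deterministic part contributes $\frac1{s^2}\sum_{w,w''\in\mathcal R} n_{w',w}n_{w',w''}\langle \E z_w-\bar z,\E z_{w''}-\bar z\rangle$. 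This exact separation into an inner-variation piece and an outer-variation piece, with no residual cross-covariance, is precisely what independence buys and is what upgrades the statement from a bound to the equality in \eqref{lemma:resampling-inner-outer-variation-inequatlity}.

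It then remains to average over $\mathcal R'$ and take expectation over the resampling, for which I would invoke the second-moment structure of sampling without replacement from \cite[Proposition 1]{he2020byzantine}. The expected squared multiplicity $\E n_{w',w}^2$ and the expected cross product $\E[n_{w',w}n_{w',w''}]$ for $w\neq w''$ are symmetric in the worker labels and differ by exactly the hypergeometric correction responsible for the factor $d=\frac{W-1}{sW-1}$. For the outer piece I would exploit $\sum_{w\in\mathcal R}(\E z_w-\bar z)=0$ to rewrite $\sum_{w\neq w''}\langle\E z_w-\bar z,\E z_{w''}-\bar z\rangle=-\sum_{w}\|\E z_w-\bar z\|^2$, so that the diagonal and off-diagonal moments collapse into the single coefficient $d$ multiplying $\frac1R\sum_{w\in\mathcal R}\|\E z_w-\bar z\|^2$. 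For the inner piece the same $d$ reappears, together with an extra $\frac{1-d}{R}$ that reflects the additional cancellation of $s$ independent noises inside each resampled point, yielding the coefficient $\big(d+\frac{1-d}{R}\big)$; collecting the two pieces gives the claim.

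I expect the last step to be the main obstacle. Evaluating the resampling moments is delicate because the normalizing set $\mathcal R'$ is itself random and couples to the multiplicities $n_{w',w}$: conditioning an output to be uncontaminated alters its effective sampling pool, and a naive unconditional computation produces an $R$-scaled surrogate rather than the stated $d$. Making the constants land exactly on $d$ and $d+\frac{1-d}{R}$ is where the exchangeability of the resampling over worker indices and the identity of \cite[Proposition 1]{he2020byzantine} must be applied with care, and it is the technically hardest part of the argument.
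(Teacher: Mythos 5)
Your plan is correct and the constants do land where you predict, but it is organized differently from the paper's proof, and the difference determines where the technical difficulty sits. You condition first on the resampling pattern (the multiplicities $n_{w',w}$), use independence of $\{z_w, w\in\mathcal{R}\}$ to split the conditional second moment into the inner piece $\frac{1}{s^2}\sum_{w} n_{w',w}^2\,\E\|z_w-\E z_w\|^2$ plus an outer quadratic form, and only then average over the resampling --- which forces you to know $\E[n_{w',w}^2]$ and $\E[n_{w',w}n_{w',w''}]$, exactly the step you flag as hardest. The paper conditions in the opposite order: it first treats $\{z_w\}$ as fixed vectors so that \cite[Proposition 1]{he2020byzantine} applies verbatim, giving that the expectation of $\tilde z_{w'}$ over the resampling alone is $\frac{1}{R}\sum_{w} z_w$ with variance $\frac{d}{R}\sum_{w}\|z_w-\frac{1}{R}\sum_{u}z_u\|^2$; the cross term in $\E\|\tilde z_{w'}-\bar z\|^2=\E\|\tilde z_{w'}-\frac{1}{R}\sum_{w}z_w\|^2+\E\|\frac{1}{R}\sum_{w}z_w-\bar z\|^2$ then vanishes for free, and the remaining quantity is expanded over the randomness of the $z_w$ via a three-term decomposition whose cross terms evaluate to $0$, $0$, and $-2\E\|\frac{1}{R}\sum_{w}(z_w-\E z_w)\|^2$, with independence invoked once at the end to get $\E\|\frac{1}{R}\sum_{w}(z_w-\E z_w)\|^2=\frac{1}{R^2}\sum_{w}\E\|z_w-\E z_w\|^2$ --- this is precisely where $\frac{1-d}{R}$ comes from. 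Your route buys a more transparent inner/outer separation, but it requires the full second-moment structure of the multiplicities, which the cited proposition only gives implicitly; since its variance identity holds for arbitrary fixed vectors, polarization yields $\frac{1}{s^2}\mathrm{Cov}(n_{w',w},n_{w',u})=d\bigl(\frac{\delta_{wu}}{R}-\frac{1}{R^2}\bigr)$, after which your diagonal/off-diagonal bookkeeping with $\sum_{w}(\E z_w-\bar z)=0$ does reproduce the coefficients $d+\frac{1-d}{R}$ and $d$ exactly, so the obstacle you anticipate is resolvable; the paper's ordering of conditioning simply avoids having to resolve it. Your worry about $\mathcal{R}'$ being random is inherited from citing the proposition in either route and is not made worse by yours.
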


Without resampling, we have the decomposition
\begin{align}\label{decomposition}
	& \frac{1}{R}\sum_{w\in\mathcal{R}} \E \|z_w-\bar z\|^2 \\
	= & \frac{1}{R}\sum_{w\in\mathcal{R}}\E \left\|z_w - \E z_w\right\|^2
	+ \frac{1}{R}\sum_{w\in\mathcal{R}} \left\|\E z_w - \bar z\right\|^2. \nonumber
\end{align}
Let us compare the two equalities \eqref{lemma:resampling-inner-outer-variation-inequatlity} and \eqref{decomposition}. The left-hand sides of \eqref{lemma:resampling-inner-outer-variation-inequatlity} and \eqref{decomposition} are the mean-square errors of $\{\tilde{z}_{w'}, w' \in \mathcal{R}'\}$ and $\{z_w, w \in \mathcal{R}\}$ relative to $\bar z$, respectively. Since $d+\frac{1-d}{R} \leq 1$ and $d \leq 1$, we see that the bias of $\{\tilde{z}_{w'}, w' \in \mathcal{R}'\}$ to $\bar z$ is smaller than the bias of $\{z_w, w \in \mathcal{R}\}$ to $\bar z$, showing the ``variance reduction'' property of resampling.

\noindent \textbf{Remark 2}.
The proof of Lemma \ref{lemma:resampling-inner-outer-variation} is nontrivial. We handle the variance of resampled vectors from two aspects, namely, the mean-square error between resampled vectors and original vectors and that between original vectors and averaged vector. This enables the application of \cite[Proposition 1]{he2020byzantine} to the proof.

\subsection{Main theorem}
The following main theorem establishes the convergence property of the proposed method.
\begin{theorem}
	\label{Theorem:RS+Geomed+SAGA}
	Under Assumptions \ref{assumption1} and \ref{assumption2}, if the number of Byzantine workers satisfies $B < \frac{W}{2s}$ and the step size satisfies
	$$
	\gamma \leq \frac{\mu}{2\sqrt{10} J^2 L^2 C_{s\alpha} },
	$$
	then the iterate $x^k$ generated by the proposed method in Algorithm 2 satisfies
	%
	\begin{align}\label{th1-1}
		\E \|x^k - x^*\|^2 \leq (1 - \frac{\gamma\mu}{2})^k \Delta_1 + \Delta_2,
	\end{align}
	where
	\begin{align}\label{th1-2}
		\Delta_1 := \|x^0 - x^*\|^2 - \Delta_2,
	\end{align}
	\begin{align}\label{th1-3}
		\Delta_2 := \frac{5d C_{s\alpha}^2 \delta^2}{\mu^2},
	\end{align}
	while $\alpha :=\frac{B}{W}$, $C_{s\alpha}:= \frac{2-2s\alpha}{1-2s\alpha}$, $d: = \frac{W-1}{s W - 1}$, and $\E$ denote the expectation with respect to all random variables $i_w^k$ and the resampling processes.
\end{theorem}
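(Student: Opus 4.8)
The plan is to run a strongly-convex descent analysis on $\E\|x^{k+1}-x^*\|^2$ in which the gradient surrogate is the resampled geometric median $g^k:=\mathrm{geomed}(\{\tilde v_w^k,w\in\mathcal{W}\})$, and to close the recursion with a Lyapunov function that also tracks the SAGA memory. I would first condition on the filtration $\mathcal{F}^k$ generated by everything up to time $k$, so that $x^k$ and all stored points $\phi_{w,j}^k$ are known. The key structural fact is that SAGA is conditionally unbiased, $\E[v_w^k\mid\mathcal F^k]=f_w'(x^k)$, hence $\bar v^k=\frac1R\sum_{w\in\mathcal R}\E[v_w^k\mid\mathcal F^k]=f'(x^k)$. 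Expanding $\|x^{k+1}-x^*\|^2=\|x^k-x^*\|^2-2\gamma\langle g^k,x^k-x^*\rangle+\gamma^2\|g^k\|^2$ and taking the conditional expectation, I would use $\mu$-strong convexity (Assumption \ref{assumption1}) with $f'(x^*)=0$ to lower-bound $\langle f'(x^k),x^k-x^*\rangle\ge\mu\|x^k-x^*\|^2$, split the residual $\langle \E g^k-f'(x^k),x^k-x^*\rangle$ with Young's inequality, and bound $\|g^k\|^2\le 2\|g^k-f'(x^k)\|^2+2L^2\|x^k-x^*\|^2$. This isolates a single error term $\E[\|g^k-f'(x^k)\|^2\mid\mathcal F^k]$ to be controlled.

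This is where the two variance-reduction mechanisms enter. Conditioned on $\mathcal F^k$ the regular messages $\{v_w^k,w\in\mathcal R\}$ are independent, since each depends only on worker $w$'s fresh sample $i_w^k$, so I can apply Lemma \ref{lemma:resampling-geomed} with $z_w=v_w^k$ and $\bar z=f'(x^k)$. This bounds $\E[\|g^k-f'(x^k)\|^2\mid\mathcal F^k]$ by the inner-variation term $(d+\frac{1-d}{R})\frac{C_{s\alpha}^2}{R}\sum_{w\in\mathcal R}\E[\|v_w^k-f'_w(x^k)\|^2\mid\mathcal F^k]$ plus the outer-variation term $\frac{dC_{s\alpha}^2}{R}\sum_{w\in\mathcal R}\|f'_w(x^k)-f'(x^k)\|^2$. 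The outer term is controlled directly by Assumption \ref{assumption2} to give $dC_{s\alpha}^2\delta^2$, which is the irreducible quantity that becomes $\Delta_2$. For the inner term I would use the SAGA identity together with $\E\|X-\E X\|^2\le\E\|X\|^2$ to obtain $\E[\|v_w^k-f'_w(x^k)\|^2\mid\mathcal F^k]\le\frac1J\sum_j\|f'_{w,j}(x^k)-f'_{w,j}(\phi_{w,j}^k)\|^2$, then split around $f'_{w,j}(x^*)$ and use $L$-Lipschitz gradients to bound it by $2L^2\|x^k-x^*\|^2+2P^k$, where $P^k:=\frac1R\sum_{w\in\mathcal R}\frac1J\sum_j\|f'_{w,j}(\phi_{w,j}^k)-f'_{w,j}(x^*)\|^2$ is the SAGA memory potential. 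Crucially this needs no uniform inner-variation bound, which explains why Assumption \ref{assumption3} does not appear in the theorem: SAGA makes the inner variation self-shrinking.

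To close the loop I need the evolution of $P^k$. Since each worker refreshes one uniformly random coordinate of its memory to $x^k$, a direct calculation gives $\E[P^{k+1}\mid\mathcal F^k]\le(1-\frac1J)P^k+\frac{L^2}{J}\|x^k-x^*\|^2$. I would then form the Lyapunov function $V^k:=\|x^k-x^*\|^2+\beta P^k$ and combine the descent inequality with this potential recursion. Collecting the coefficients of $\|x^k-x^*\|^2$ and of $P^k$ yields a $2\times2$ linear system; choosing the weight $\beta$ large enough to absorb the $P^k$ cross-terms and then forcing both diagonal coefficients below $1-\frac{\gamma\mu}{2}$ produces the step-size restriction $\gamma\le\frac{\mu}{2\sqrt{10}J^2L^2C_{s\alpha}}$, where the $J^2$ and the numerical constant are the price of balancing the SAGA memory against the geometric-median coefficient $C_{s\alpha}^2$. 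This gives $\E[V^{k+1}\mid\mathcal F^k]\le(1-\frac{\gamma\mu}{2})V^k+\frac{\gamma\mu}{2}\Delta_2$ with $\Delta_2=\frac{5dC_{s\alpha}^2\delta^2}{\mu^2}$. Taking total expectation and unrolling the geometric recursion gives $\E V^k\le(1-\frac{\gamma\mu}{2})^k(V^0-\Delta_2)+\Delta_2$; since $\|x^k-x^*\|^2\le V^k$ and the SAGA initialization $\phi_{w,j}^0=x^0$ lets me reduce the transient coefficient to $\Delta_1=\|x^0-x^*\|^2-\Delta_2$, the claimed bound \eqref{th1-1} follows.

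The main obstacle is the entanglement flagged by the authors: the geometric-median error depends on the per-worker variances $\E\|v_w^k-f'_w(x^k)\|^2$, but under SAGA these are time-varying quantities tied to the memory $P^k$ and the current iterate, so they cannot be treated as a fixed bound $\sigma^2$. The delicate part is therefore constructing a \emph{single} Lyapunov function whose two components contract jointly after the resampling-scaled inner-variation coefficient $(d+\frac{1-d}{R})C_{s\alpha}^2$ is fed into the $P^k$ recursion, and checking that the induced step-size condition is self-consistent, in particular that $\frac1J>\frac{\gamma\mu}{2}$ so the memory genuinely contracts. A secondary subtlety is the legitimacy of applying Lemma \ref{lemma:resampling-geomed}: it requires the regular messages to be independent and requires $B<\frac{W}{2s}$, both of which must be verified conditionally on $\mathcal F^k$ before the outer expectation is taken.
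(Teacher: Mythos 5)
Your proposal follows essentially the same route as the paper: a strongly-convex descent inequality, Lemma \ref{lemma:resampling-geomed} applied conditionally to $\{v_w^k\}$ with $\bar z = f'(x^k)$ (so that the outer-variation term is killed by Assumption \ref{assumption2} and becomes $\Delta_2$, while the inner-variation term is fed into a SAGA potential), and a two-term Lyapunov function that is telescoped. The one genuine difference is the choice of potential: the paper tracks $S^k = \frac{1}{R}\sum_{w\in\mathcal R}\frac1J\sum_j\|x^k-\phi_{w,j}^k\|^2$ (iterate-memory distances, as in Byrd-SAGA), whose recursion $\E S^{k+1}\le 4J\,\E\|x^{k+1}-x^k+\gamma f'(x^k)\|^2+4J\gamma^2L^2\|x^k-x^*\|^2+(1-\frac{1}{J^2})S^k$ re-couples to the update direction and hence requires the geometric-median bound to be applied a second time inside the potential recursion; you track $P^k=\frac{1}{R}\sum_{w}\frac1J\sum_j\|f'_{w,j}(\phi_{w,j}^k)-f'_{w,j}(x^*)\|^2$ (gradient-memory distances to the optimum, as in the original SAGA analysis), whose recursion $(1-\frac1J)P^k+\frac{L^2}{J}\|x^k-x^*\|^2$ is self-contained and contracts at rate $1/J$ rather than $1/J^2$. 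Your version is arguably cleaner to close, but the specific constants in the theorem ($\Delta_2=\frac{5dC_{s\alpha}^2\delta^2}{\mu^2}$ and the factor $2\sqrt{10}J^2$ in the step-size bound) are artifacts of the paper's bookkeeping with $S^k$; with $P^k$ you would obtain the same orders but would need to re-derive (and likely obtain different) numerical constants, so asserting that exactly these constants ``come out'' of your balancing is the one place where the sketch overstates what it has shown. Your observations that Lemma \ref{lemma:resampling-geomed} must be invoked conditionally on $\mathcal F^k$ (where the regular $v_w^k$ are independent and conditionally unbiased) and that Assumption \ref{assumption3} is not needed are both correct and consistent with the paper.
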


Theorem \ref{Theorem:RS+Geomed+SAGA} shows that the proposed method reaches a neighborhood of the optimal solution at a linear convergence rate. The asymptotic learning error $\Delta_2={\cal O}(d C_{s\alpha}^2 \delta^2)$ is determined by the outer variation, rather than the inner variation. In contrast, the learning error of Byrd-SAGA is ${\cal O}(C_{\alpha}^2 \delta^2)$ when $B < \frac{W}{2}$ . For the combination of distributed SGD and geometric median (that we call as Byrd-SGD) and the combination of distributed SGD, resampling and geometric median (that we call as RS-Byrd-SGD and give a novel analysis in Theorem \ref{Theorem:RS-Byrd-SGD}), the learning errors are ${\cal O}(C_{\alpha}^2 \sigma^2 + C_{\alpha}^2 \delta^2 )$  and ${\cal O}((d + \frac{1-d}{R}) C_{s\alpha}^2 \sigma^2 + d C_{s\alpha}^2 \delta^2)$ when $B < \frac{W}{2}$ and $B < \frac{W}{2s}$, respectively. We compare the learning errors in Table 1.

\begin{table}[!h]
	\centering
	\begin{tabular}{cccc}
		\hline
		Algorithm & Requirement & Learning Error \\
		\hline
		Byrd-SGD & $B<\frac{W}{2}$ & ${\cal O}(C_{\alpha}^2 \sigma^2 + C_{\alpha}^2 \delta^2 )$ \cite{byrdsaga-arxiv} \\
		\hline
		RS-Byrd-SGD & $B<\frac{W}{2s}$ & ${\cal O}((d + \frac{1-d}{R}) C_{s\alpha}^2 \sigma^2 + d C_{s\alpha}^2 \delta^2)$ \\
		\hline
		Byrd-SAGA & $B<\frac{W}{2}$ & ${\cal O}(C_{\alpha}^2 \delta^2)$ \cite{byrdsaga-arxiv} \\
		\hline
		\textbf{Our Proposed} & $B<\frac{W}{2s}$ & ${\cal O}(d C_{s\alpha}^2 \delta^2)$\\
		\hline
	\end{tabular}
	\caption{Requirements \& learning errors of four algorithms.}
	\label{Table:asymptoticLearningError}
	\vspace{-2em}
\end{table}

\begin{figure}
	\centering
	\includegraphics[scale=0.45]{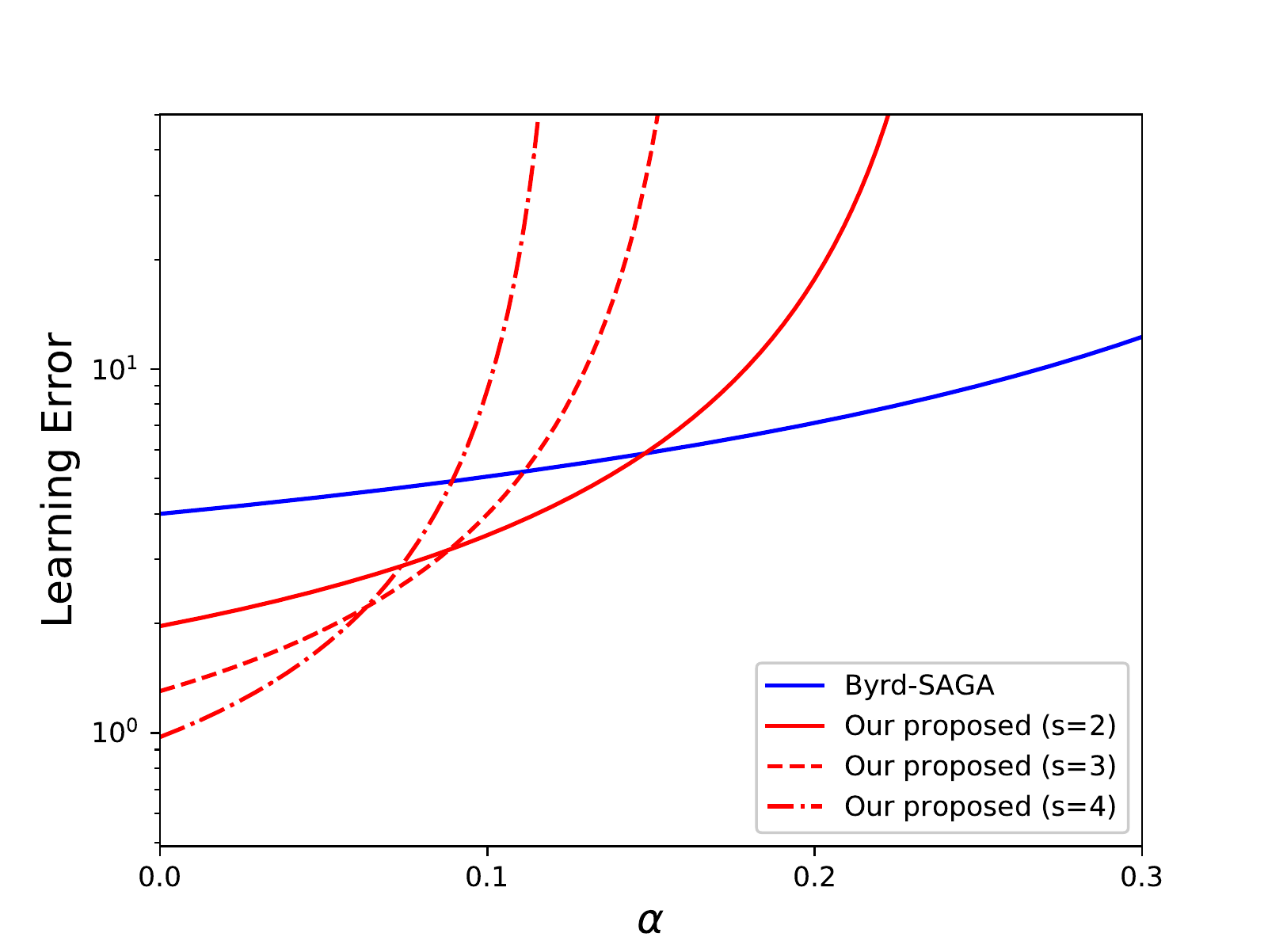}
	\caption{Learning error versus $\alpha$, the fraction of Byzantine workers, for Byrd-SAGA and the proposed method with different $s$. We omit $\delta^2$ in the learning errors to compare $C_\alpha^2$ with $d C_{s\alpha}^2$. The number of all workers is $W=30$.}
	\label{Fig:AsymptoticLearningError}
	\vspace{-1em}
\end{figure}

Fig. \ref{Fig:AsymptoticLearningError} shows the dependency of learning error on $\alpha := \frac{B}{W}$ for Byrd-SAGA and the proposed method with different $s$. We omit the outer variation $\delta^2$ in the learning errors and compare $C_\alpha^2$ with $d C_{s\alpha}^2$. The number of all workers is set as $W=30$, which is consistent with the setting in the numerical experiments. Observe that when $\alpha$ is small enough, the proposed method has smaller learning error than Byrd-SAGA. This fact depicts the advantage of resampling in reducing the outer variation and handling the non-i.i.d. data. The side-effect of resampling is the tolerance to a smaller fraction of Byzantine workers. Larger $s$ means tolerance to less Byzantine workers. According to the analysis, $s=2$ achieves satisfactory tradeoff between the learning error and the tolerable fraction of Byzantine workers. Therefore, in the numerical experiments we set $s=2$, which is also recommended by \cite{he2020byzantine}.

Note that \eqref{eq:geomed} has no closed-form solution, and must be solved with iterative algorithms. Therefore, we often consider $\epsilon$-approximate geometric median, allowing the computed vector and the true geometric median has a gap of $\epsilon$. We use a fast Weiszfeld's algorithm \cite{weiszfeld2009point} to compute the $\epsilon$-approximate geometric median in the numerical experiments. For simplicity, in the analysis above we assume $\epsilon=0$. In the appendices, we also consider the case of $\epsilon \neq 0$.

\noindent \textbf{Remark 3}. The proof of Theorem \ref{Theorem:RS+Geomed+SAGA} is built upon Lemma \ref{lemma:resampling-geomed}. There is a corresponding \cite[Theorem 1]{byrdsaga-arxiv} to show the convergence of Byrd-SAGA without resampling. The proof of \cite[Theorem 1]{byrdsaga-arxiv} is based on \cite[Lemma 1]{byrdsaga-arxiv}, whose difference with our Lemma \ref{lemma:resampling-geomed} has been discussed in Section \ref{sec:concentration}.


\section{Numerical Experiments}
This section presents numerical experiments to demonstrate the robustness of our proposed method. We consider both convex and nonconvex distributed learning problems. For the convex problem, we focus on the softmax regression on the MNIST \cite{mnist} dataset. For the nonconvex problem, we train two-layer perceptrons, in which each layer has 50 neurons and the activation function is `tanh', on the MINST and COVTYPE \cite{covtype} datasets. The attributes of the datasets are described in Table 2. In the i.i.d. case, we use the MNIST dataset, launch 1 central node and $W=30$ workers, and let the data evenly distributed across all workers. In the non-i.i.d. case, we first use the MNIST dataset, launch 1 central node and $W=30$ workers, and let every three workers to evenly share the data from one class. We then use the COVTYPE dataset, launch 1 central node and $W=21$ workers, and also let every three workers to evenly share the data from one class. The numerical experiments are conducted on a server with two Intel(R) Xeon(R) Silver 4216 CPUs and four GeForce RTX 2080 GPUs. The source codes are available at \url{https://github.com/pengj97/Byzantine-robust-variance-reduction}

\begin{table}[tbp]
	\centering
	\label{table1}
	\setlength{\tabcolsep}{1.5mm}{
		\begin{tabular}{ccccc}
			\hline
			& & &   \\[-9pt]
			Name  &Train&  Test& Dimensions & Classes \\
			\hline
			& & &  \\[-9pt]
			MNIST  & 60000 &  10000 & 784 & 10 \\
			\hline
			& & &  \\[-9pt]
			COVTYPE  & 11340 &  565892 & 54 & 7\\
			\hline
		\end{tabular}
	}
	\caption{Attributes of the MNIST and COVTYPE datasets.}	\vspace{-2em}
\end{table}

\subsection{Benchmark methods}
We compare our proposed method with several benchmarks.

\noindent\textbf{Distributed SGD.} The distributed SGD aggregates the received messages by returning the mean, and hence has no robustness against Byzantine attacks.

\noindent\textbf{Byrd-SGD.} Byrd-SGD aggregates the received messages by returning the geometric median, as shown in \eqref{eq:geomed_update}.

\noindent\textbf{RS-Byrd-SGD.} RS-Byrd-SGD first resamples the received messages with $s$-replacement, and then aggregates the results by returning the geometric median.

\noindent\textbf{Krum.} Krum aggregates the received messages by returning the one that has the smallest summation of squared distances to its $W -B -2$ nearest neighbors, given by
\begin{align}
	& \mathrm{Krum}(\{z_w, w \in \mathcal{W}\}) = z_{w^*}, \nonumber \\
	& w^* = \arg\min_{w \in \mathcal{W}} \sum_{w \rightarrow w'} \|z_w - z_{w'}\|^2. \nonumber
\end{align}
Here $w \rightarrow w'$ selects the indexes $w'$ of the $W-B-2$ nearest neighbors of $z_w$ in $\{z_w, w \in \mathcal{W}\}$. Note that Krum needs to know the number of Byzantine workers in advance \cite{krumm-krum}.

\noindent\textbf{Byrd-SAGA.} Byrd-SAGA aggregates the received messages by returning the geometric median, but the regular workers returns the corrected stochastic gradients, not the stochastic gradients in the above benchmark methods \cite{byrdsaga,byrdsaga-arxiv}.

\noindent\textbf{RSA.} RSA is based on model aggregation, not stochastic gradient aggregation in the above benchmark methods. The central node and every worker $w$ maintains iterates $x_0^k$ and $x_w^k$, respectively. The update rule of the central node is
\begin{align}
	x_0^{k+1} = x_0^k - \gamma \cdot \lambda \sum_{w \in \mathcal{W}} \mathrm{sign}(x_0^k - x_w^k), \nonumber
\end{align}
while the update rule of regular worker $w \in \mathcal{R}$ is
\begin{align}
	x_w^{k+1} = x_w^k - \gamma \cdot \left( f'_{w, i_w^k} (x_w^k) + \lambda \mathrm{sign}(x_w^k - x_0^k) \right). \nonumber
\end{align}
Here $\mathrm{sign}$ is the element-wise sign function and $\lambda > 0$ is the penalty parameter \cite{RSA}.

In the numerical experiments, the batch size is 32. When using resampling, we set $s=2$ by default. All parameters in the benchmark methods are hand-tuned to the best.

\subsection{Byzantine attacks}

We consider the following Byzantine attacks.

\noindent \textbf{Sign-flipping attacks.} Every Byzantine worker $w \in \mathcal{B}$ computes its true message $\hat{v}_w^k$, and then sends $v_w^k = c\hat{v}_w^k$ to the central node. Here we set $c=-5$.

\noindent \textbf{Gaussian attacks.} Every Byzantine worker $w \in \mathcal{B}$ sends message $v_w^k$, whose elements follow Gaussian distribution $\mathcal{N}(0,10000)$, to the central node.

\noindent \textbf{Sample-duplicating attacks.} The Byzantine workers collude to choose a specific regular worker, and duplicate its message at every time. This amounts to that the Byzantine workers duplicate the data samples of the chosen regular worker. The sample-duplicating attacks are applied in the non-i.i.d. case.

\subsection{Softmax regression in the i.i.d. case}

We carry out numerical experiments in softmax regression on the MNIST dataset. The step size of our proposed method is $\gamma = 0.5$. When there exist Byzantine attacks, we uniformly randomly select $B=6$ Byzantine workers.

\begin{figure}
	\centering
	\includegraphics[width=6.2cm]{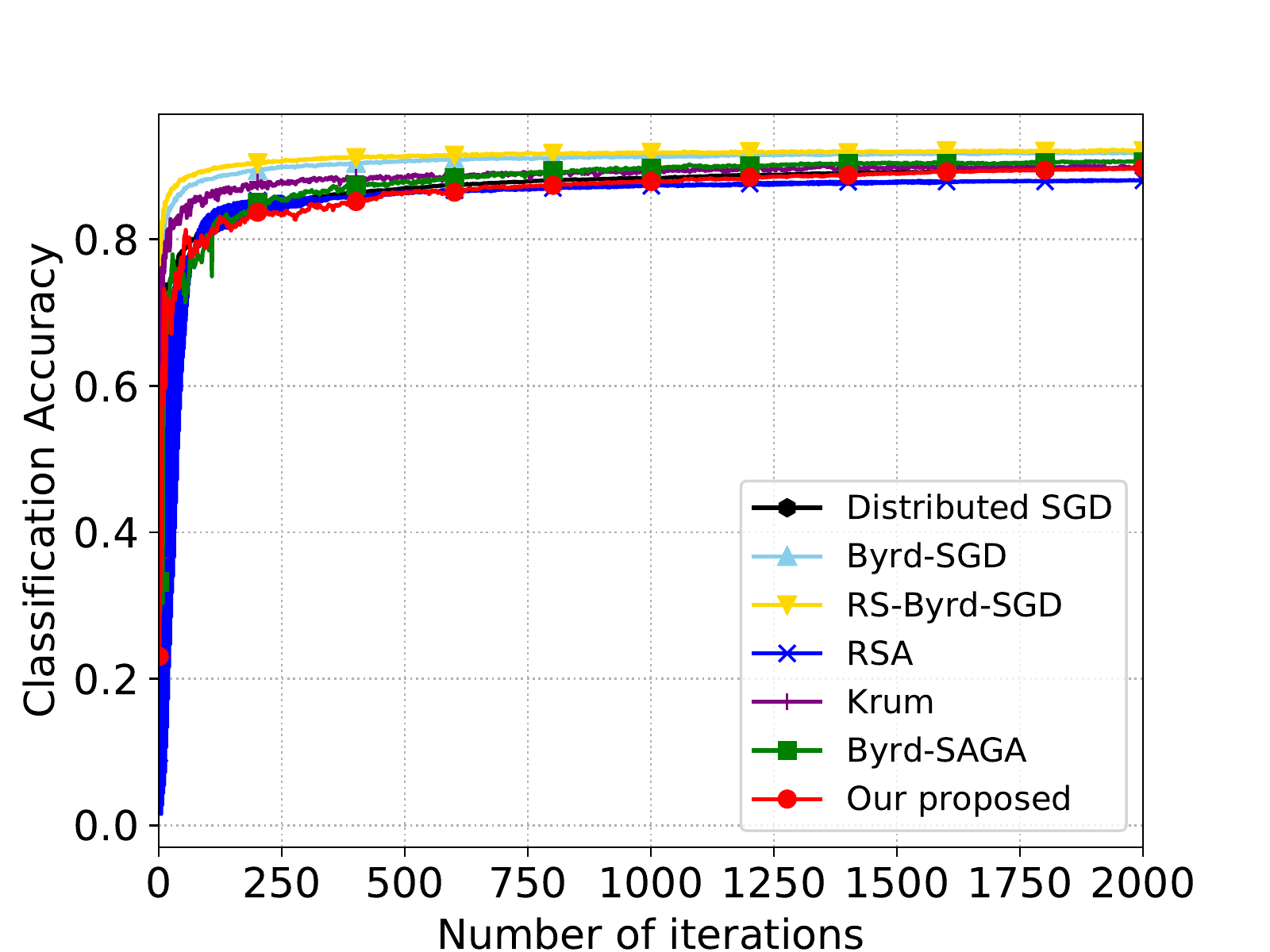}
	\caption{Without Byzantine attacks on i.i.d. MNIST data in softmax regression.}
	\label{WA}
\end{figure}
\begin{figure}
	\centering
	\includegraphics[width=6.2cm]{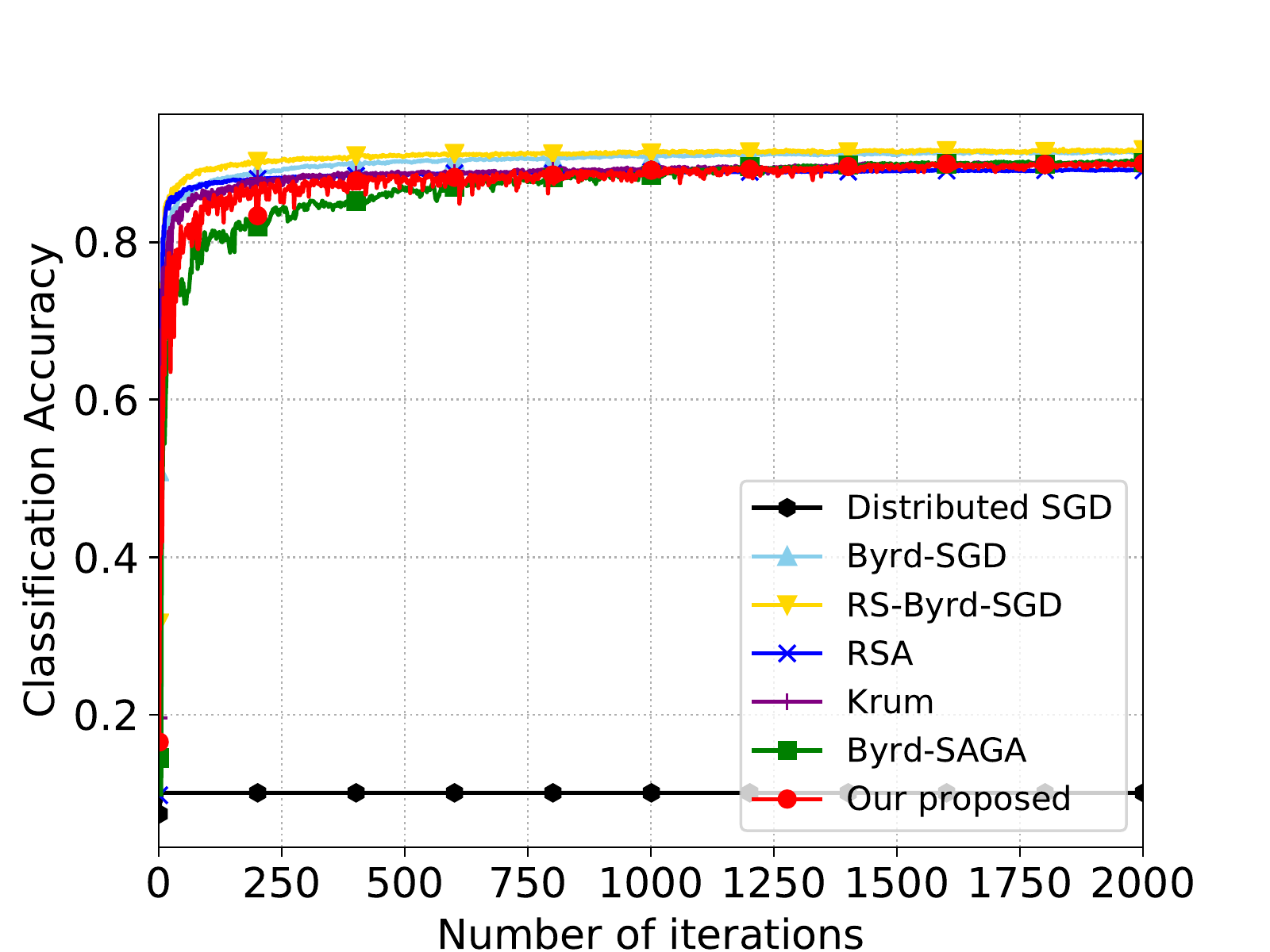}
	\caption{With sign-flipping attacks on i.i.d. MNIST data in softmax regression.}
	\label{SF}
\end{figure}
\begin{figure}
	\centering
	\includegraphics[width=6.2cm]{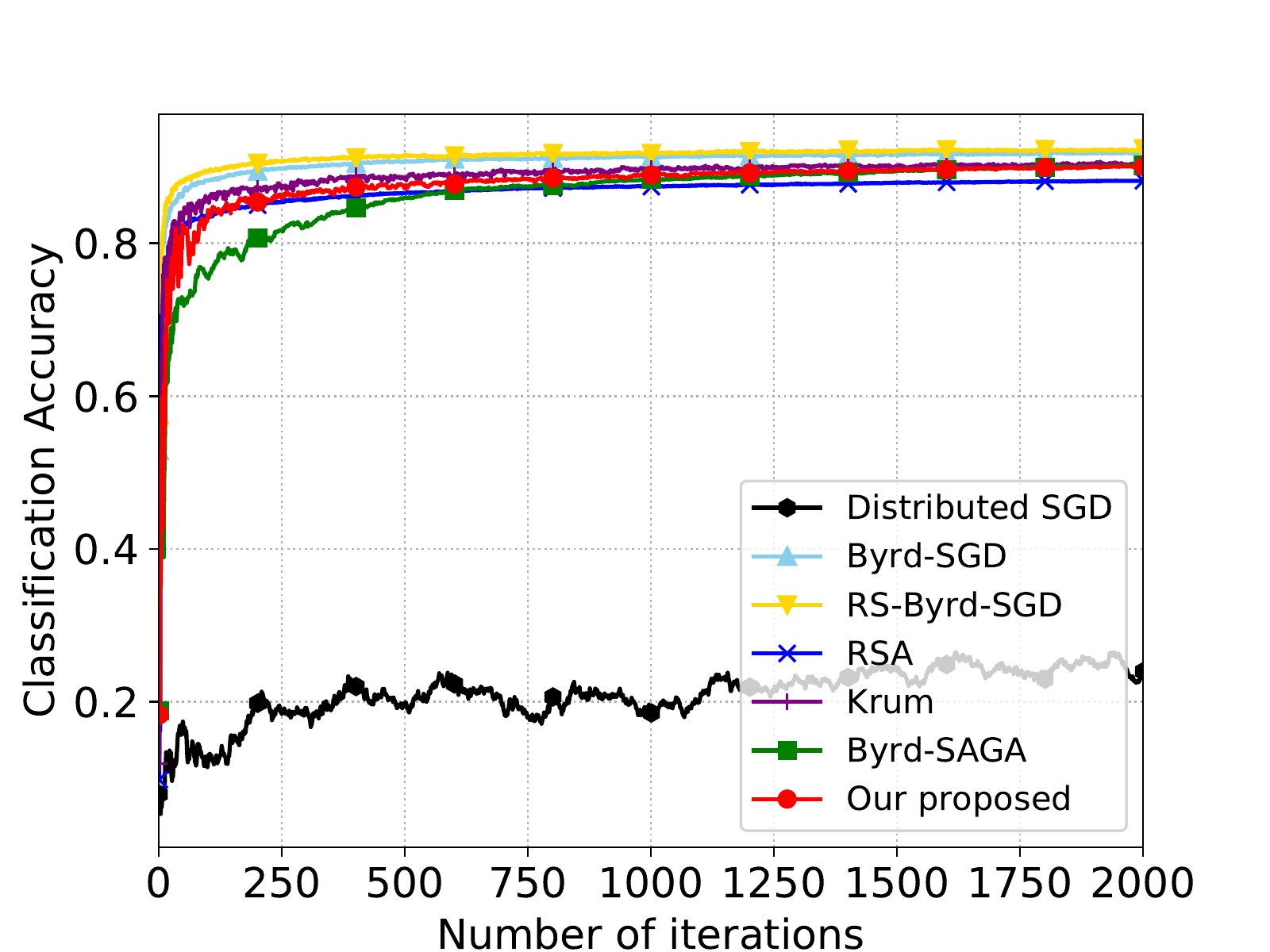}
	\caption{With Gaussian attacks on i.i.d. MNIST data in softmax regression.}
	\label{GA}
\end{figure}
\begin{figure}
	\centering
	\includegraphics[width=6.2cm]{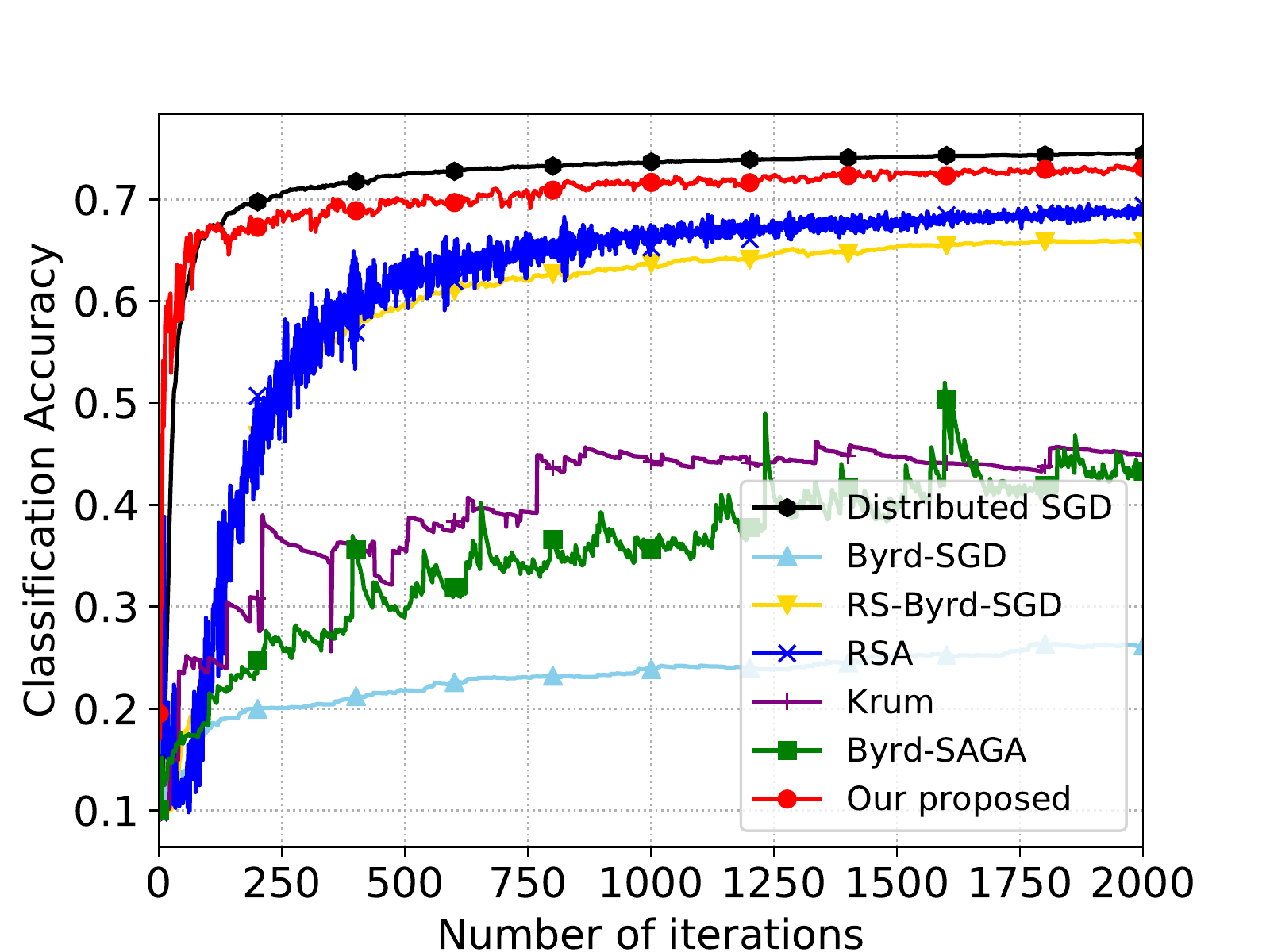}
	\caption{With sample-duplicating attacks on non-i.i.d. MNIST data in softmax regression.}
	\label{HD}
\end{figure}
\begin{figure}
	\centering
	\includegraphics[width=6.2cm]{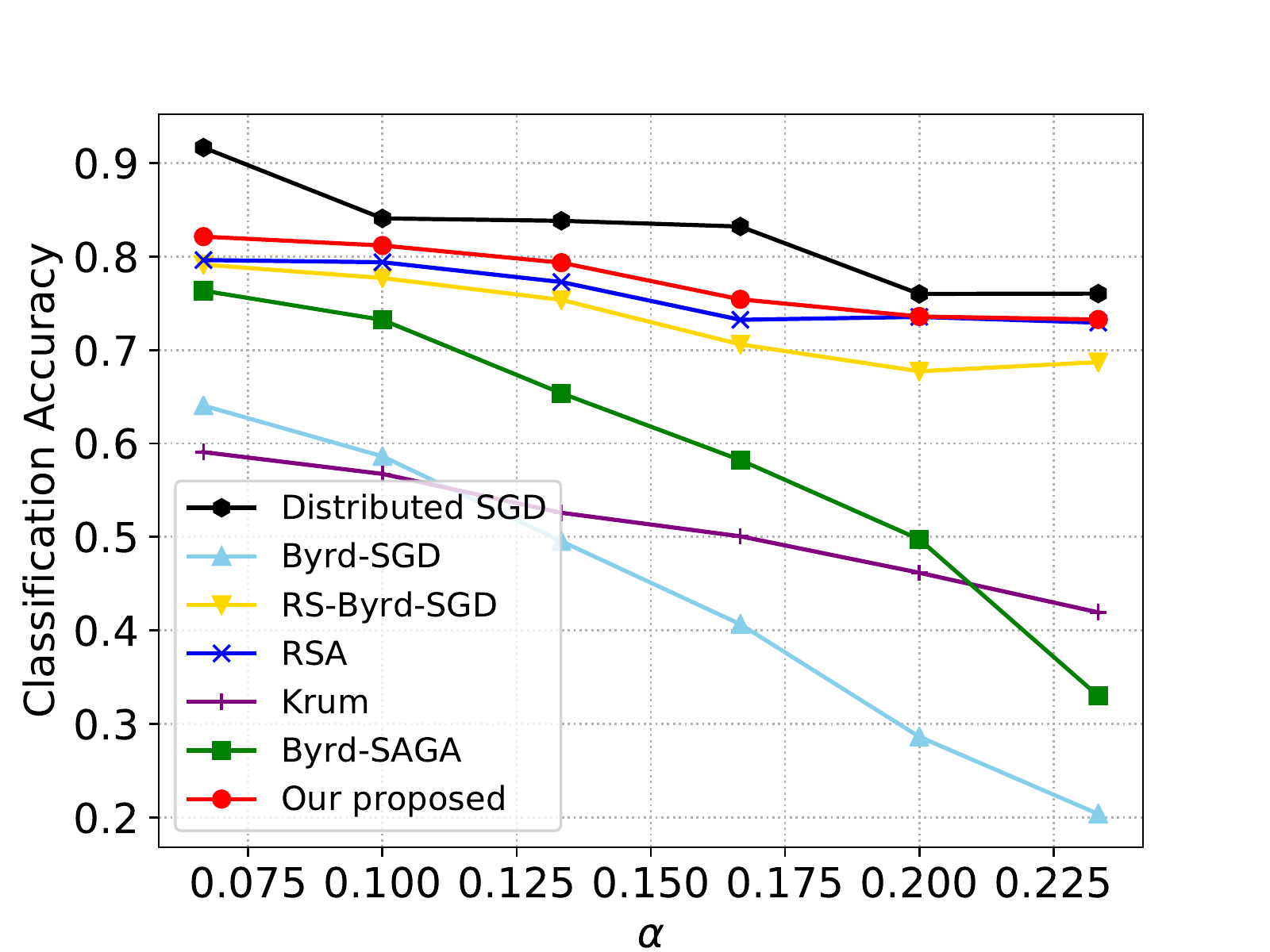}
	\caption{Impact of the fraction of Byzantine workers, with sample-duplicating attacks on non-i.i.d. MNIST data in softmax regression.}
	\label{function_alpha}
\end{figure}
\begin{figure}
	\centering
	\includegraphics[width=6.2cm]{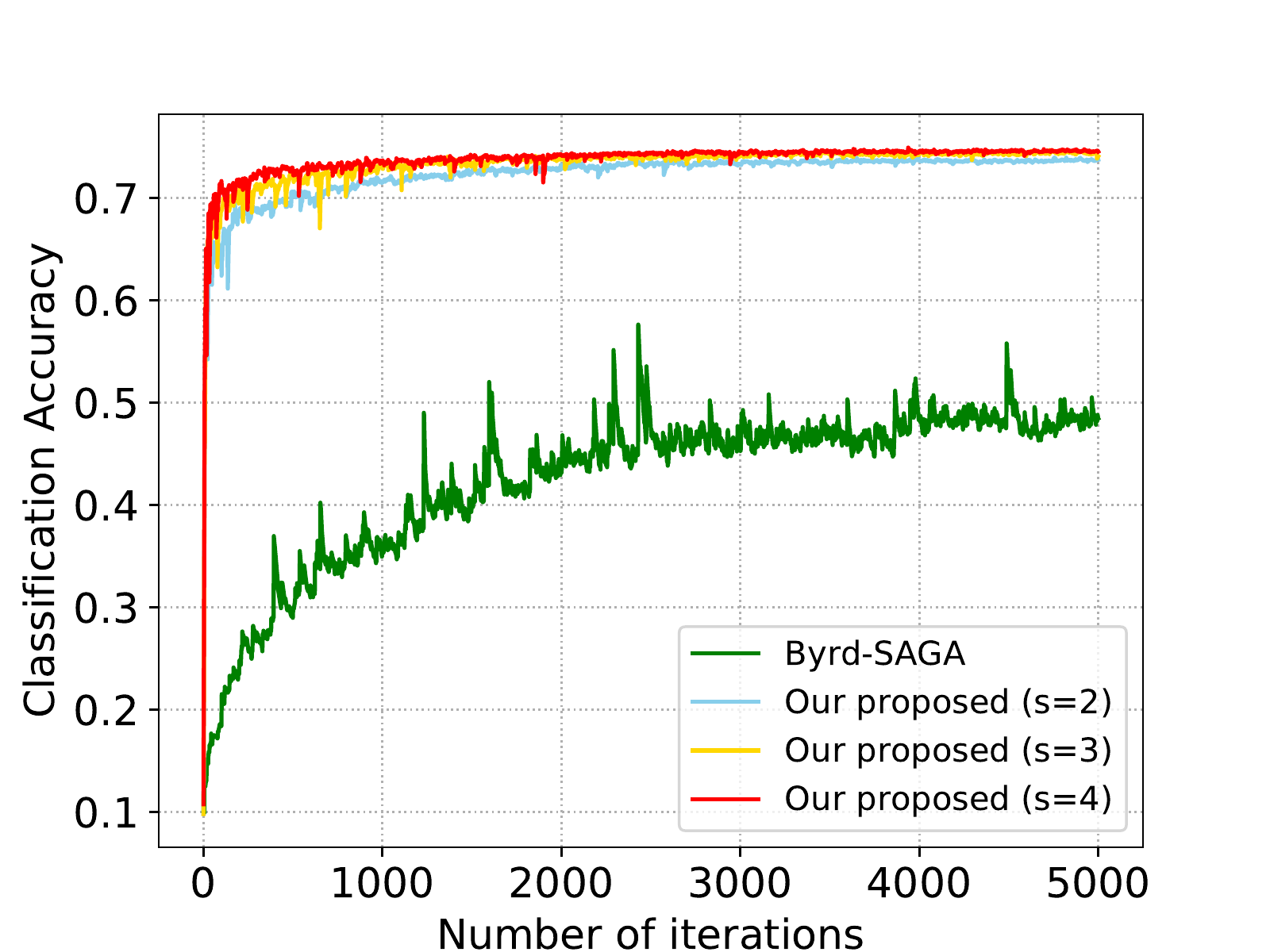}
	\caption{Impact of the resampling size, with sample-duplicating attacks on non-i.i.d. MNIST data in softmax regression.}
	\label{function_s}
\end{figure}

\noindent \textbf{Without Byzantine attacks.} When there exist no Byzantine attacks, all the methods are able to achieve satisfactory classification accuracies, as illustrated in Fig. \ref{WA}.

\noindent \textbf{Sign-flipping attacks.} As depicted in Fig. \ref{SF}, the distributed SGD is vulnerable and fails under the sign-flipping attacks. The other methods all perform well.

\noindent \textbf{Gaussian attacks.} The results of Gaussian attacks are shown in Fig. \ref{GA}. The distributed SGD also fails, while the other methods have robustness to Gaussian attacks.

\subsection{Softmax regression in the non-i.i.d. case}

\noindent \textbf{Sample-duplicating attacks.} We choose $B=6$ workers that originally share the samples of two classes as Byzantine workers. Therefore, these two classes essentially disappear under the sample-duplicating attacks, such that the best possible accuracy is 0.8. For the regular worker that is chosen to be duplicated, its class now has 9 workers out of the total $W=30$. The step size of our proposed method is $\gamma = 0.5$. As shown in Fig. \ref{HD}, our proposed method is able to achieve the classification accuracy of around 0.73. RS-Byrd-SGD also demonstrates robustness since the resampling strategy helps reduce the impact of both inner and outer variations. However, as we have shown in the theoretical analysis, solely using resampling is unable to fully eliminate the inner variation. In contrast, the proposed method introduces SAGA to address this issue, yielding enhanced classification accuracy. RSA works well since it is developed to handle the non-i.i.d. case. The other Byzantine-robust methods, Byrd-SGD, Krum and Byrd-SAGA, all show degraded performance since they are essentially designed for distributed i.i.d. data.

\noindent \textbf{Impact of the fraction of Byzantine workers $\alpha$.} Theorem \ref{Theorem:RS+Geomed+SAGA} and Fig. \ref{Fig:AsymptoticLearningError} demonstrate that the learning error of the proposed method is monotonically increasing when the fraction of Byzantine workers $\alpha$ increases. This theoretical result is validated by Fig. \ref{function_alpha}. The other experimental settings are the same as those in Fig. \ref{HD}. The classification accuracies of all the methods degrade when $\alpha$ increases. Compared to other benchmark methods, our proposed method is comparatively insensitive to the variation of $\alpha$.

\noindent \textbf{Impact of the resampling size $s$.} Theorem \ref{Theorem:RS+Geomed+SAGA} and Fig. \ref{Fig:AsymptoticLearningError} also shows that the learning error of the proposed method is dependent on the resampling size $s$. Fig. \ref{function_s} depicts how the classification accuracies vary with respect to different $s$. The other experimental settings are the same as those in Fig. \ref{HD}. When $s=1$, our proposed method degenerates to Byrd-SAGA, which is not Byzantine-robust in the non-i.i.d. case. When $s=3$ and $s=4$, for these particular attacks, the proposed method still performs well, although the fraction of Byzantine workers $\alpha = \frac{1}{5}$ exceeds the theoretically tolerable bound $\frac{1}{2s}$.

\subsection{Neural network training in the non-i.i.d. case}

\noindent \textbf{Sample-duplicating attacks on MNIST.} We choose $B=3$ workers that originally share the samples of one class as Byzantine workers, such that the best possible accuracy is 0.9 under the sample-duplicating attacks. The step size of our proposed method is $\gamma = 0.1$. We compare Byrd-SGD, RS-Byrd-SGD, Byrd-SAGA and our proposed method in Fig. \ref{nn_mnist}. Byrd-SGD fails, but Byrd-SAGA is able to attain favorable classification accuracy since the ratio of Byzantine workers is only $10\%$. Among the two methods developed for the non-i.i.d. case, our proposed method outperforms RS-Byrd-SGD. The performance gain of Byrd-SAGA over Byrd-SGD and that of the proposed method over RS-Byrd-SGD are both due to the elimination of inner variation.

\noindent \textbf{Sample-duplicating attacks on COVTYPE.} We choose $B=3$ workers that originally share the samples of one class as Byzantine workers, such that the best possible accuracy is 0.86 under the sample-duplicating attacks. The step size of our proposed method is $\gamma = 0.1$. As shown in Fig. \ref{nn_covtype}, Byrd-SGD is still the worst. However, Byrd-SAGA and RS-Byrd-SGD are both remarkably outperformed by our proposed method, since the ratio of Byzantine workers is now raised to $14\%$. Therefore, eliminating the inner variation and reducing the outer variation are of particular importance as a larger portion of received messages are Byzantine.

\begin{figure}
	\centering
	\includegraphics[width=6.2cm]{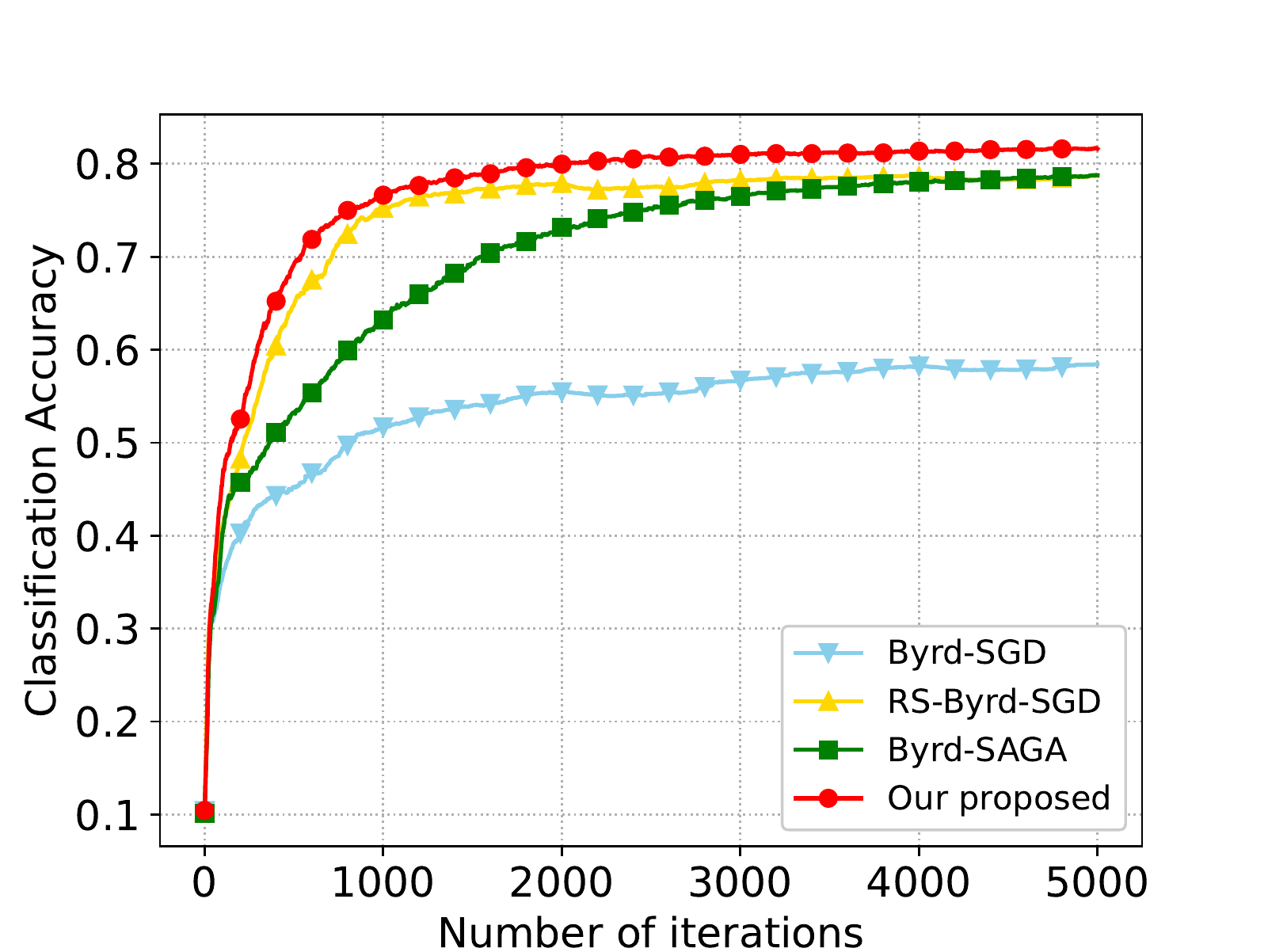}
	\caption{Classification accuracy with sample-duplicating attacks on non-i.i.d. MNIST data in neural network training.}
	\label{nn_mnist}
\end{figure}
\begin{figure}
	\centering
	\includegraphics[width=6.2cm]{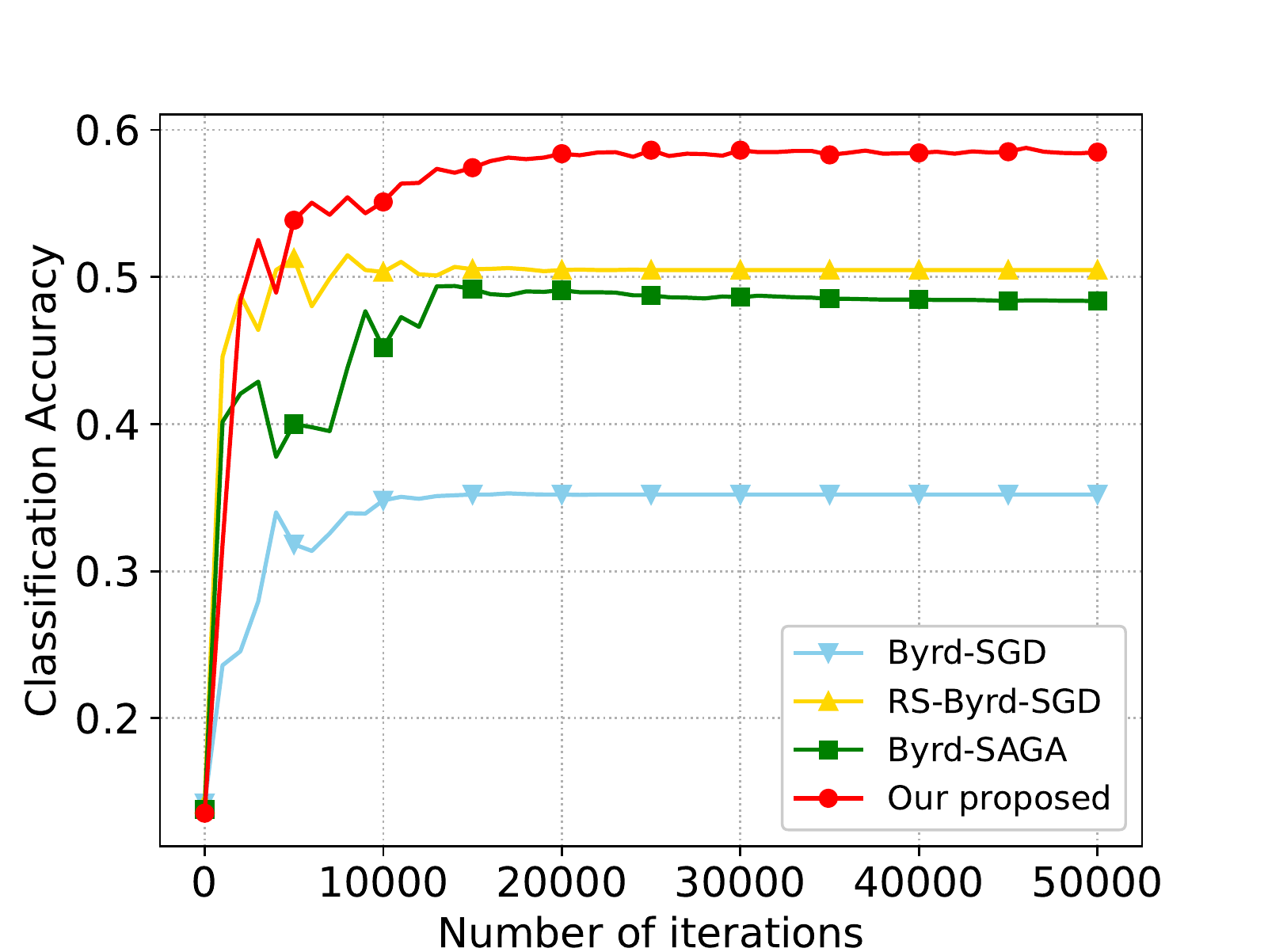}
	\caption{Classification accuracy with sample-duplicating attacks on non-i.i.d. COVTYPE data in neural network training.}
	\label{nn_covtype}
\end{figure}

\section{Conclusions}
We develop a Byzantine-robust variance-reduced method to deal with the federated learning problem with distributed non-i.i.d. data. To reduce the impact of stochastic gradient noise that hinders the resistance to Byzantine attacks, we adopt the resampling strategy and SAGA to reduce the outer variation and fully eliminate the inner variation. The variance-reduced messages are then aggregated by geometric median. Theoretical results show that the proposed method can reach a neighborhood of the optimal solution with linear convergence rate and the learning error is determined by the number of Byzantine workers. Numerical experiments demonstrate the robustness of our proposed method. In the future work, we will investigate the elimination of outer variation, and the combination of variance-reduced methods with other robust aggregation rules.


%

\appendices
\vspace{1em}

\section{Proof of Lemma \ref{lemma:resampling-geomed}}

We begin with reviewing the supporting lemma.
\begin{lemma}
	\label{lemma:resampling}
	\cite[Proposition 1]{he2020byzantine}
	Let $\{z_w, w\in\mathcal{W}\}$ be a set of vectors, and generate from it a new set $\{\tilde{z}_w, w\in \mathcal{W}\}$ using the resampling strategy with $s$-replacement. When $B < \frac{W}{s}$, there exists a set $\mathcal{R}' \subseteq \mathcal{W}$ with at least $W-sB$ elements, such that for any $w' \in \mathcal{R}'$ it holds that
	\begin{align}
		\E \tilde{z}_{w'} = \frac{1}{R}\sum_{w\in\mathcal{R}} z_w,
		\label{equation:resampling-expectation}
	\end{align}
	\begin{align}
		\E\|\tilde{z}_{w'}-\E \tilde{z}_{w'}\|^2 =
		\frac{d}{R}\sum_{w\in\mathcal{R}}
		\left\|z_w-\frac{1}{R}\sum_{u\in\mathcal{R}}z_{u} \right\|^2,
		\label{equation:resampling-variance}
	\end{align}
	where $d:=\frac{W-1}{sW-1}$ and the expectation $\E$ is taken over the resampling process.
\end{lemma}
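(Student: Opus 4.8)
The plan is to analyze the $s$-replacement resampling of Algorithm~\ref{algorithm:resample} as a balanced without-replacement scheme and to read off the two moment identities \eqref{equation:resampling-expectation} and \eqref{equation:resampling-variance} from its combinatorial structure. First I would record the conservation property of the procedure: over the $W$ rounds there are exactly $sW$ selections, and the per-message cap forces every original vector $z_w$ to be selected exactly $s$ times in total. Hence the resampling is equivalent to distributing $s$ labeled copies (``slots'') of each of the $W$ vectors into the $W$ output buckets of size $s$, with $\tilde{z}_w$ the bucket average. Because the selection rule is uniform subject only to the common cap $s$, this induced allocation is exchangeable across buckets, so it suffices to compute the first two moments of a single generic Byzantine-free bucket.

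Second, I would locate the set $\mathcal{R}'$. The $B$ Byzantine vectors account for exactly $sB$ of the $sW$ slots, and these slots can fall into at most $sB$ distinct buckets; therefore at least $W - sB$ buckets contain only regular slots. Taking $\mathcal{R}'$ to be the indices of these Byzantine-free buckets gives $|\mathcal{R}'| \ge W - sB$. For such a bucket every one of its $s$ slots holds a regular vector, and by the symmetry of the scheme among the regular vectors each $z_w$, $w \in \mathcal{R}$, is equally likely to occupy a given slot; averaging yields $\E \tilde{z}_{w'} = \frac{1}{R}\sum_{w\in\mathcal{R}} z_w = \bar{z}$, which is \eqref{equation:resampling-expectation}.

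Third, and this is the crux, I would establish the variance identity \eqref{equation:resampling-variance}. Writing a Byzantine-free bucket as $\tilde{z}_{w'} = \frac{1}{s}\sum_{\zeta=1}^{s} z_{w_\zeta}$ and expanding $\|\tilde{z}_{w'}-\bar{z}\|^2$, the expectation splits into $s$ diagonal terms and $s(s-1)$ off-diagonal terms. The diagonal terms contribute the per-slot dispersion $\frac{1}{R}\sum_{w\in\mathcal{R}}\|z_w-\bar{z}\|^2$, while the off-diagonal terms are governed by the second-order inclusion probability of the scheme, namely the chance that two distinct slots of the same bucket carry prescribed copies; for the balanced allocation this probability carries the factor $\tfrac{1}{sW-1}$, which is precisely what introduces $W$ (rather than $R$, or a naive $1/s$) into the final constant. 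Collecting the diagonal and off-diagonal contributions and simplifying collapses everything onto $\frac{1}{R}\sum_{w\in\mathcal{R}}\|z_w-\bar{z}\|^2$ with coefficient $d = \frac{W-1}{sW-1}$, giving \eqref{equation:resampling-variance}. The main obstacle is exactly this covariance bookkeeping: the $s$ selections inside a bucket are dependent through the global $s$-replacement coupling, so the pairwise co-occurrence probabilities must be computed under that coupling and then reconciled with the restriction to Byzantine-free buckets. Verifying that this produces the stated constant $d$ — and not the correction one would obtain by naively treating a clean bucket as an isolated without-replacement sample from the $R$ regular vectors — is the delicate step, and it is here that we reproduce the counting argument of \cite{he2020byzantine}.
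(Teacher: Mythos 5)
First, a point of comparison: the paper contains no proof of Lemma~\ref{lemma:resampling} at all --- it is imported verbatim from \cite[Proposition~1]{he2020byzantine} --- so your reconstruction has to stand on its own. Its skeleton is sound and matches the intended argument: the per-message cap forces each of the $W$ messages to be drawn exactly $s$ times, so the output is a balanced allocation of $sW$ copies into $W$ buckets of size $s$; the $sB$ Byzantine copies can touch at most $sB$ buckets, which yields $\mathcal{R}'$ with $|\mathcal{R}'|\ge W-sB$; and slot-level symmetry among the regular copies gives \eqref{equation:resampling-expectation}, provided (as you implicitly do) everything is read conditionally on the bucket being clean, since $\mathcal{R}'$ is a random set.

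The genuine gap is at exactly the step you call the crux, and it is not merely a deferral (``we reproduce the counting argument of \cite{he2020byzantine}'') but an error in the claimed constant. Conditioned on a bucket being Byzantine-free (equivalently, on the placement of the $sB$ Byzantine copies), the balanced-allocation model makes that bucket's $s$ slots a uniform without-replacement sample from the $sR$ \emph{regular} copies, not from all $sW$ copies; the off-diagonal covariance therefore carries the factor $-\tfrac{1}{sR-1}$, not $-\tfrac{1}{sW-1}$, and collecting terms gives $\E\|\tilde z_{w'}-\bar z\|^2=\tfrac{R-1}{sR-1}\cdot\tfrac{1}{R}\sum_{w\in\mathcal{R}}\|z_w-\bar z\|^2$. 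Since $\tfrac{n-1}{sn-1}$ is increasing in $n$, this equals $d$ times the dispersion only when $B=0$ and is strictly smaller otherwise, so your route can deliver \eqref{equation:resampling-variance} only as an upper bound with constant $d=\tfrac{W-1}{sW-1}$ --- which is in fact all that Lemma~\ref{lemma:resampling-inner-outer-variation} and Theorem~\ref{Theorem:RS+Geomed+SAGA} downstream require --- and the ``reconciliation with the restriction to Byzantine-free buckets'' that you invoke to retain $\tfrac{1}{sW-1}$ cannot be carried out: the very conditioning that makes a bucket clean removes the Byzantine copies from the population to which your second-order inclusion probabilities refer. A second, independent problem is your opening premise that Algorithm~\ref{algorithm:resample}'s sequential cap-constrained draws induce an exchangeable (uniform) balanced allocation. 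This is false: for $W=2$, $s=2$ the sequential scheme produces bucket contents $\{z_1,z_1\}$, mixed, $\{z_2,z_2\}$ with probabilities $\tfrac14,\tfrac12,\tfrac14$ and bucket variance $\tfrac12 V$, whereas the uniform allocation gives $\tfrac16,\tfrac23,\tfrac16$ and $\tfrac13 V = dV$, where $V$ is the dispersion of $\{z_1,z_2\}$ about their mean. The exact moment identities are thus tied to the permutation-style resampling analyzed in \cite{he2020byzantine}, and both your exchangeability shortcut and the application of the lemma to the pseudocode as written require that model rather than the sequential uniform-among-available draws.
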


Lemma \ref{lemma:resampling} shows that after resampling, the expectations of at least $W-sB$ elements are the average of $\{z_w, w\in \mathcal{R}\}$. Their variance $\E\|\tilde{z}_{w'}-\E \tilde{z}_{w'}\|^2$ reduces by a factor of $\frac{1}{d}$ relative to the variation $\frac{1}{R}\sum_{w\in\mathcal{R}} \|z_w-\frac{1}{R}\sum_{w\in\mathcal{R}}z_{w}\|^2$. Note that $\frac{1}{d}$ is close to $s$ if $W$ is sufficiently large. The properties of these $W-sB$ elements are further investigated in Lemma \ref{lemma:resampling-inner-outer-variation}, whose proof is as follows.
\begin{proof}
	We decompose $\frac{1}{|\mathcal{R'}|}\sum_{w'\in\mathcal{R}'} \E \|\tilde{z}_{w'}-\bar z\|^2$ into two terms
	\begin{align}
		\label{inequality:resample-2}
		& \frac{1}{|\mathcal{R'}|}\sum_{w'\in\mathcal{R}'} \E \|\tilde{z}_{w'}-\bar z\|^2 \\
		=& \frac{1}{|\mathcal{R'}|}\sum_{w'\in\mathcal{R}'}
		\E \left\|
		\tilde{z}_{w'} - \frac{1}{R}\sum_{w\in \mathcal{R}}z_{w}
		\right\|^2
		+ \E \left\|
		\frac{1}{R}\sum_{w\in \mathcal{R}}z_{w}-\bar z
		\right\|^2 \nonumber \\
		= & \frac{d}{R}\sum_{w\in\mathcal{R}}\E \left\|
		z_w - \frac{1}{R}\sum_{u\in \mathcal{R}}z_{u}
		\right\|^2
		+ \E \left\|
		\frac{1}{R}\sum_{w\in \mathcal{R}}z_{w}-\bar z
		\right\|^2,  \nonumber
	\end{align}
	where the last equality comes from Lemma \ref{lemma:resampling} that for any $w' \in \mathcal{R}'$ it holds
	\begin{align}
		\E \left\|
		\tilde{z}_{w'} - \frac{1}{R}\sum_{w\in \mathcal{R}}z_{w}
		\right\|^2
		= & \E \left\|
		\tilde{z}_{w'} - \E \tilde{z}_{w'}
		\right\|^2 \nonumber\\
		= & \frac{d}{R}\sum_{w\in\mathcal{R}}\E \left\|
		z_w - \frac{1}{R}\sum_{u\in \mathcal{R}}z_{u}
		\right\|^2.\nonumber
	\end{align}
	The first term at the right-hand side of \eqref{inequality:resample-2} can be further decomposed into three terms
	\begin{align}
		\label{inequality:resample-3}
		&\frac{1}{R}\sum_{w\in\mathcal{R}}\E \left\|
		z_w - \frac{1}{R}\sum_{u\in \mathcal{R}}z_{u}
		\right\|^2  \\
		=& \frac{1}{R}\sum_{w\in\mathcal{R}}\E \Bigg\|
		(z_w - \E z_w)
		+ \left(\E z_w - \frac{1}{R}\sum_{u\in \mathcal{R}}\E z_{u}\right) \nonumber \\
		&+ \left(\frac{1}{R}\sum_{u\in \mathcal{R}}\E z_{u} - \frac{1}{R}\sum_{u\in \mathcal{R}}z_{u}\right)
		\Bigg\|^2 \nonumber \\
		=& \frac{1}{R}\sum_{w\in\mathcal{R}}
		\E \left\|z_w - \E z_w\right\|^2
		+ \frac{1}{R}\sum_{w\in\mathcal{R}} \left\|\E z_w - \frac{1}{R}\sum_{u\in \mathcal{R}}\E z_{u}\right\|^2 \nonumber \\
		+& \E \left\|\frac{1}{R}\sum_{w\in \mathcal{R}}z_{w} -\frac{1}{R}\sum_{w\in \mathcal{R}}\E z_{w} \right\|^2 \nonumber \\
		+& \underbrace{
			\frac{2}{R}\sum_{w\in\mathcal{R}} \E\left\langle
			z_w - \E z_w,
			\E z_w - \frac{1}{R}\sum_{u\in \mathcal{R}}\E z_{u}
			\right\rangle
		}_{T_1} \nonumber \\
		+& \underbrace{
			\frac{2}{R}\sum_{w\in\mathcal{R}} \E\left\langle
			\E z_w - \frac{1}{R}\sum_{u\in \mathcal{R}}\E z_{u},
			\frac{1}{R}\sum_{u\in \mathcal{R}}\E z_{u} - \frac{1}{R}\sum_{u\in \mathcal{R}}z_{u}
			\right\rangle
		}_{T_2} \nonumber \\
		+& \underbrace{
			\frac{2}{R}\sum_{w\in\mathcal{R}} \E\left\langle
			z_w - \E z_w,
			\frac{1}{R}\sum_{u\in \mathcal{R}}\E z_{u} - \frac{1}{R}\sum_{u\in \mathcal{R}}z_{u}
			\right\rangle
		}_{T_3} \nonumber \\
		=& \frac{1}{R}\sum_{w\in\mathcal{R}}
		\E \left\|z_w - \E z_w\right\|^2 + \frac{1}{R}\sum_{w\in\mathcal{R}} \left\|\E z_w - \frac{1}{R}\sum_{u\in \mathcal{R}}\E z_{u}\right\|^2 \nonumber \\
		&- \E \left\|\frac{1}{R}\sum_{w\in \mathcal{R}}z_w -\frac{1}{R}\sum_{w\in \mathcal{R}}\E z_w  \right\|^2. \nonumber
	\end{align}
	To see how to reach the last equality, we check the three cross terms. The first cross term can be cancelled by
	\begin{align}
		T_1 =& \frac{2}{R}\sum_{w\in\mathcal{R}} \E\left\langle
		z_w - \E z_w,
		\E z_w - \frac{1}{R}\sum_{u\in \mathcal{R}}\E z_{u}
		\right\rangle  \\
		=& \frac{2}{R}\sum_{w\in\mathcal{R}} \left\langle
		\underbrace{
			\E z_w - \E z_w
		}_{=0},
		\E z_w - \frac{1}{R}\sum_{u\in \mathcal{R}}\E z_{u}
		\right\rangle \nonumber \\
		=& 0. \nonumber
	\end{align}
	Similarly, the second cross term can be cancelled by
	\begin{align}
		T_2 		=&\frac{2}{R}\sum_{w\in\mathcal{R}} \E\left\langle
		\E z_w - \frac{1}{R}\sum_{u\in \mathcal{R}}\E z_{u},
		\frac{1}{R}\sum_{u\in \mathcal{R}}\E z_{u} - \frac{1}{R}\sum_{u\in \mathcal{R}}z_{u}
		\right\rangle \\
		=& \frac{2}{R}\sum_{w\in\mathcal{R}} \left\langle
		\E z_w - \frac{1}{R}\sum_{u\in \mathcal{R}}\E z_{u},
		\underbrace{
			\frac{1}{R}\sum_{u\in \mathcal{R}}\E z_{u} - \frac{1}{R}\sum_{u\in \mathcal{R}}\E z_{u}
		}_{=0}
		\right\rangle \nonumber \\
		=& 0 \nonumber.
	\end{align}
	The third cross term equals to
	\begin{align}
		T_3 =& \frac{2}{R}\sum_{w\in\mathcal{R}} \E\left\langle
		z_w - \E z_w,
		\frac{1}{R}\sum_{u\in \mathcal{R}}\E z_{u} - \frac{1}{R}\sum_{u\in \mathcal{R}}z_{u}
		\right\rangle \\
		=& 2 \E\left\langle
		\frac{1}{R}\sum_{w\in\mathcal{R}} z_w - \frac{1}{R}\sum_{w\in\mathcal{R}} \E z_w,
		\frac{1}{R}\sum_{u\in \mathcal{R}}\E z_{u} - \frac{1}{R}\sum_{u\in \mathcal{R}}z_{u}
		\right\rangle \nonumber \\
		=& -2\E\left\|\frac{1}{R}\sum_{w\in\mathcal{R}} z_w - \frac{1}{R}\sum_{w\in\mathcal{R}} \E z_w \right\|^2 \nonumber.
	\end{align}
	
	Substituting \eqref{inequality:resample-3} into \eqref{inequality:resample-2} and using the definition $\bar z:=\frac{1}{R}\sum_{w\in \mathcal{R}}\E z_{w}$, we have
	\begin{align}
		\label{inequality:resample-4}
		&\frac{1}{|\mathcal{R'}|}\sum_{w'\in\mathcal{R}'} \E \|\tilde{z}_{w'}-\bar z\|^2 \\
		=&
		\frac{d}{R}\sum_{w\in\mathcal{R}}
		\E \left\|z_w - \E z_w\right\|^2
		+ \frac{d}{R}\sum_{w\in\mathcal{R}} \left\|\E z_w - \frac{1}{R}\sum_{u\in \mathcal{R}}\E z_{u}\right\|^2 \nonumber \\
		&+ (1-d) \E \left\|\frac{1}{R}\sum_{w\in \mathcal{R}}z_w - \frac{1}{R}\sum_{w\in \mathcal{R}}\E z_w\right\|^2
		\nonumber \\
		=&
		\frac{d}{R}\sum_{w\in\mathcal{R}}
		\E \left\|z_w - \E z_w\right\|^2
		+ \frac{d}{R}\sum_{w\in\mathcal{R}} \left\|\E z_w - \bar z\right\|^2 \nonumber \\
		&+ \frac{1-d}{R^2}\sum_{w\in \mathcal{R}} \E \left\|z_w - \E z_w\right\|^2
		\nonumber \\
		=&
		\left(d+\frac{1-d}{R}\right)\frac{1}{R}\sum_{w\in\mathcal{R}}
		\E \left\|z_w - \E z_w\right\|^2 \nonumber\\
		&+ \frac{d}{R}\sum_{w\in\mathcal{R}} \left\|\E z_w - \bar z\right\|^2
		. \nonumber
	\end{align}
	The second equality in \eqref{inequality:resample-4} comes from
	\begin{align}
		&\E \left\|\frac{1}{R}\sum_{w\in \mathcal{R}}z_w - \frac{1}{R}\sum_{w\in \mathcal{R}}\E z_w\right\|^2 \\
		=&\E \left\|\frac{1}{R}\sum_{w\in \mathcal{R}}(z_w - \E z_w)\right\|^2 \nonumber\\
		=&\frac{1}{R^2}\sum_{w\in \mathcal{R}}\E \left\|z_w - \E z_w\right\|^2 \nonumber \\
		&+ \sum_{w,u\in \mathcal{R}\atop w\neq u}\E\left\langle
		\frac{1}{R} (z_w - \E z_w),
		\frac{1}{R} (z_{u} - \E z_{u})
		\right\rangle\nonumber \\
		=&\frac{1}{R^2}\sum_{w\in \mathcal{R}}\E \left\|z_w - \E z_w\right\|^2, \nonumber
	\end{align}
	where the last equality is due to the fact that the random vectors in $\{z_w, w\in \mathcal{R}\}$ are independent. This completes the proof.
\end{proof}

With Lemma \ref{lemma:resampling-inner-outer-variation}, the proof of Lemma \ref{lemma:resampling-geomed} is straightforward.


%

\begin{lemma}
	\label{lemma:resampling-geomed-full} (Full version of Lemma \ref{lemma:resampling-geomed})
	Let $\{z_w, w\in \mathcal{W}\}$ be a subset of random vectors distributed in a normed vector space and the random vectors in $\{z_w, w\in \mathcal{R}\}$ are independent. Generate from $\{z_w, w\in \mathcal{W}\}$ a new set $\{\tilde{z}_w, w\in \mathcal{W}\}$ using the resampling strategy with $s$-replacement. It holds when $B < \frac{W}{2s}$ that
	\begin{align}\label{lm1-1-full}
		&\E \| \mathrm{geomed}\left(\{\tilde{z}_w, w\in\mathcal{W}\} \right) - \bar{z}\|^2\  \\
		\le &\left(d+\frac{1-d}{R}\right)\frac{C_{s\alpha}^2}{R} \sum_{w\in\mathcal{R}}{ \E\|z_w-\E z_w\|^2 } \nonumber\\
		&+ \frac{dC_{s\alpha}^2}{R} \sum_{w\in\mathcal{R}}{ \|\E z_w- \bar{z}\|^2}, \nonumber
	\end{align}
	where $\bar{z}:=\frac {1} {R}\sum_{w\in\mathcal{R}} \E z_w$, $\alpha :=\frac{B}{W}$, $C_{s\alpha}:= \frac{2-2s\alpha}{1-2s\alpha}$, $d: = \frac{W-1}{s W - 1}$, and $\E$ is taken over the random vectors and the resampling process. Define $\tilde{z}_\epsilon^*$ as an $\epsilon$-approximate geometric median of $\{\tilde{z}_w, w\in\mathcal{W}\}$. It holds when $B < \frac{W}{2s}$ that
	\begin{align}\label{lm1-1-noise}
		&\E \| \tilde{z}_\epsilon^* - \bar{z}\|^2  \\
		\leq &\left(d+\frac{1-d}{R}\right)\frac{2C_{s\alpha}^2}{R} \sum_{w\in\mathcal{R}}{ \E\|z_w-\E z_w\|^2 } \nonumber \\
		&+ \frac{2dC_{s\alpha}^2}{R} \sum_{w\in\mathcal{R}}{ \|\E z_w- \bar{z}\|^2} + \frac{2 \epsilon^2}{(W-2sB)^2}. \nonumber
	\end{align}
\end{lemma}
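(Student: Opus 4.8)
The plan is to prove the exact-median bound \eqref{lm1-1-full} by applying the geometric-median concentration Property \ref{lemma:geometric_error} to the resampled collection $\{\tilde z_w, w\in\mathcal W\}$, and then to deduce the $\epsilon$-approximate bound \eqref{lm1-1-noise} from it by a triangle-inequality split. First I would invoke Lemma \ref{lemma:resampling} (i.e. \cite[Proposition 1]{he2020byzantine}), which produces a deterministic index set $\mathcal R'\subseteq\mathcal W$ of size at least $W-sB$ such that, conditioned on any realization of $\{z_w\}$, each resampled message $\tilde z_{w'}$ with $w'\in\mathcal R'$ has resampling-expectation $\frac1R\sum_{w\in\mathcal R}z_w$. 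Taking the outer expectation over the random vectors then gives $\E\tilde z_{w'}=\bar z$ for every $w'\in\mathcal R'$, now with the total expectation, and the crucial point is that this common mean is shared by all $w'\in\mathcal R'$. I then apply Property \ref{lemma:geometric_error} to $\{\tilde z_w, w\in\mathcal W\}$, treating $\mathcal R'$ as the regular set and its complement (of cardinality at most $sB$) as the Byzantine set; this is legitimate because Property \ref{lemma:geometric_error} permits arbitrary Byzantine vectors and imposes no independence, while the majority condition it requires, $|\mathcal W\setminus\mathcal R'|<W/2$, follows from $sB<W/2$, i.e. $B<\frac{W}{2s}$. Since every $\E\tilde z_{w'}=\bar z$, the reference mean equals $\bar z$ and, more importantly, the second (bias/outer) term $\frac{C_{\alpha'}^2}{|\mathcal R'|}\sum_{w'\in\mathcal R'}\|\E\tilde z_{w'}-\bar z\|^2$ vanishes identically, collapsing the right-hand side to the single averaged-variance term $\frac{C_{\alpha'}^2}{|\mathcal R'|}\sum_{w'\in\mathcal R'}\E\|\tilde z_{w'}-\bar z\|^2$, where $\alpha':=|\mathcal W\setminus\mathcal R'|/W\le s\alpha$.

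Next I would re-expand precisely this averaged variance using Lemma \ref{lemma:resampling-inner-outer-variation}, whose hypotheses ($B<\frac{W}{s}$ and independence of $\{z_w, w\in\mathcal R\}$) are implied here. That lemma rewrites $\frac1{|\mathcal R'|}\sum_{w'\in\mathcal R'}\E\|\tilde z_{w'}-\bar z\|^2$ as $\big(d+\frac{1-d}{R}\big)\frac1R\sum_{w\in\mathcal R}\E\|z_w-\E z_w\|^2+\frac dR\sum_{w\in\mathcal R}\|\E z_w-\bar z\|^2$, thereby recovering both the inner-variation and the outer-variation contributions of the original vectors with the advertised coefficients. Finally, since the contaminated fraction is at most $s\alpha$ and $C_{\cdot}$ is increasing on $[0,\tfrac12)$, the constant produced by Property \ref{lemma:geometric_error} satisfies $C_{\alpha'}\le C_{s\alpha}$; multiplying through by $C_{s\alpha}^2$ yields exactly \eqref{lm1-1-full}.

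For the $\epsilon$-approximate bound \eqref{lm1-1-noise} I would write $\tilde z_\epsilon^*-\bar z=(\tilde z_\epsilon^*-\tilde z^*)+(\tilde z^*-\bar z)$, with $\tilde z^*:=\mathrm{geomed}(\{\tilde z_w, w\in\mathcal W\})$, and apply $\|a+b\|^2\le 2\|a\|^2+2\|b\|^2$. Taking expectations, the term $2\E\|\tilde z^*-\bar z\|^2$ is bounded by twice the right-hand side of \eqref{lm1-1-full}, which accounts for the doubled coefficients. For the remaining term I would invoke the stability of the geometric median under the majority condition: the $\epsilon$-approximate minimizer of the convex geomed objective lies within $\frac{\epsilon}{W-2sB}$ of the exact one, whence $2\E\|\tilde z_\epsilon^*-\tilde z^*\|^2\le \frac{2\epsilon^2}{(W-2sB)^2}$; the margin $W-2sB>0$ is guaranteed by $B<\frac{W}{2s}$. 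Summing the two pieces gives \eqref{lm1-1-noise}.

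I expect the main obstacle to be twofold. First, the careful bookkeeping of the two independent sources of randomness: I must ensure that $\mathcal R'$ is a fixed index set not depending on the realized values $\{z_w\}$, so that $\E\tilde z_{w'}=\bar z$ holds with the \emph{total} expectation and the bias term genuinely vanishes, and that Property \ref{lemma:geometric_error} is applied with this total expectation rather than only the resampling expectation. Second, pinning down the exact stability constant $\frac{1}{W-2sB}$ relating the $\epsilon$-approximate and exact geometric medians requires the robustness estimate for the geometric-median objective and is the one step that does not follow mechanically from the earlier lemmas.
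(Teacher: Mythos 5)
Your proof of the exact-median bound \eqref{lm1-1-full} is correct and follows essentially the paper's route: invoke Lemma \ref{lemma:resampling} to identify a majority subset $\mathcal{R}'$ of at least $W-sB$ resampled messages whose (total) expectation is $\bar z$, bound the geometric median's mean-square error by $\frac{C_{s\alpha}^2}{|\mathcal{R}'|}\sum_{w'\in\mathcal{R}'}\E\|\tilde z_{w'}-\bar z\|^2$, and expand that quantity with Lemma \ref{lemma:resampling-inner-outer-variation}. The only cosmetic difference is that you reach the intermediate inequality by applying Property \ref{lemma:geometric_error} with the bias term vanishing, whereas the paper cites the one-term concentration bound \cite[Lemma 2]{byrdsaga-arxiv} directly; these are equivalent here.

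The gap is in the $\epsilon$-approximate bound \eqref{lm1-1-noise}. Your decomposition hinges on the claim that an $\epsilon$-approximate minimizer of the geomed objective satisfies $\|\tilde z_\epsilon^*-\tilde z^*\|\le\frac{\epsilon}{W-2sB}$, but no bound of the form $\|\tilde z_\epsilon^*-\tilde z^*\|\le C\epsilon$ holds in general: the objective $F(g)=\sum_w\|g-\tilde z_w\|$ is convex but not strongly convex, and its growth away from the minimizer is not controlled by the majority condition. For two points $0$ and $1$ on the line, $F$ is constant on $[0,1]$, so even an exact minimizer can sit at distance $1$ from any designated $\tilde z^*$; for three points $0,0,1$ (all regular, so $W-2sB=3$), $F(t)=1+t$ on $[0,1]$ and an $\epsilon$-approximate minimizer can be at distance $\epsilon$, not $\epsilon/3$. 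The correct mechanism — and what the paper's cited \cite[Lemma 3]{byrdsaga-arxiv} does — is to bound $\|\tilde z_\epsilon^*-\bar z\|$ \emph{directly} against the concentration center $\bar z$: once $g$ leaves a ball around $\bar z$ containing the majority $\mathcal{R}'$, the objective grows at rate at least $2|\mathcal{R}'|-W\ge W-2sB$, which is where the additive term $\frac{2\epsilon^2}{(W-2sB)^2}$ actually comes from. So your final inequality is the right one, but it cannot be obtained by routing through the exact geometric median; you need the approximate-median robustness lemma in the form that compares $\tilde z_\epsilon^*$ to $\bar z$ rather than to $\tilde z^*$.
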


\begin{proof}
	We firstly prove \eqref{lm1-1-full}. As Lemma \ref{lemma:resampling} claims, when $B < \frac{W}{s}$, there exists a set $\mathcal{R}' \subseteq \mathcal{W}$ with at least $W-sB$ elements, such that for any $w' \in \mathcal{R}'$, \eqref{equation:resampling-expectation} and \eqref{equation:resampling-variance} hold true. When $B<\frac{W}{2s}$, $\frac{|\mathcal{R}'|}{W} > \frac{1}{2}$. With \cite[Lemma 2]{byrdsaga-arxiv}, it holds that
	\begin{align}
		\label{inequality:resample-1}
		&\E \|\mathop{\text{geomed}}(\{\tilde{z}_w, w\in \mathcal{W}\})-\bar z\|^2  \\
		=& \E \|\mathop{\text{geomed}}(\{\tilde{z}_w-\bar z, w\in \mathcal{W}\})\|^2 \nonumber \\
		\le& \frac{C_{s\alpha}^2}{|\mathcal{R'}|}\sum_{w'\in\mathcal{R}'} \E \|\tilde{z}_{w'}-\bar z\|^2.\nonumber
	\end{align}
	Applying Lemma \ref{lemma:resampling-inner-outer-variation} completes the proof of \eqref{lm1-1-full}.
	
	Next, we prove \eqref{lm1-1-noise}. With \cite[Lemma 3]{byrdsaga-arxiv}, it holds that
	\begin{align}
		&\E \| \tilde{z}_\epsilon^* - \bar{z}\|^2  \\
		\leq& \frac{2C_{s\alpha}^2}{|\mathcal{R'}|} \sum_{w'\in\mathcal{R}'} \E \|\tilde{z}_{w'} - \bar{z}\| + \frac{2\epsilon^2}{(W - 2|\mathcal{R'}|)^2} \nonumber \\
		\leq& \frac{2C_{s\alpha}^2}{|\mathcal{R'}|} \sum_{w'\in\mathcal{R}'} \E \|\tilde{z}_{w'} - \bar{z}\| + \frac{2\epsilon^2}{(W - 2sB)^2}. \nonumber
	\end{align}
	Applying Lemma \ref{lemma:resampling-inner-outer-variation} completes the proof of \eqref{lm1-1-noise}.	
\end{proof}

\section{Proof of Theorem \ref{Theorem:RS+Geomed+SAGA}}
To prove Theorem \ref{Theorem:RS+Geomed+SAGA}, we review the following supporting lemma for SAGA.
\begin{lemma}
	\label{lemma:SAGA-innerVariation}
	\cite[Lemmas 4 and 5]{byrdsaga-arxiv}
	Under Assumption \ref{assumption1}, if all regular workers $w \in \mathcal{W}$ update $\phi_{w,i_w^k}^k$ and $v_w^k$ according to \eqref{rule:phi} and \eqref{vvv}, then the corrected stochastic gradient $v_w^k$ satisfies
	\begin{align} \label{c1}
		\E\|v_w^k-f_w'(x^k)\|^2 \leq
		L^2 \frac{1}{J} \sum_{j=1}^{J} \|x^k - \phi_{w,j}^k\|^2, \quad \forall w \in \mathcal{W},
	\end{align}
	and
	\begin{align} \label{c1-1}
		\frac{1}{R} \sum_{w\in\mathcal{R}}\E\|v_w^k-f_w'(x^k)\|^2 \leq
		L^2 S^k,
	\end{align}
	where $S^k$ is defined as
	\begin{align}\label{c4}
		S^k := \frac{1}{R} \sum_{w \in \mathcal{R}} \frac{1}{J} \sum_{j=1}^{J} \|x^k - \phi_{w, j}^k\|^2.
	\end{align}
	Further, $S^k$ satisfies
	\begin{align}\label{d1}
		\E S^{k+1} \leq & 4J  \E \|x^{k+1} - x^k + \gamma f'(x^k) \|^2 \\
		& + 4J \gamma^2 L^2 \|x^k - x^*\|^2 + (1 - \frac{1}{J^2}) S^k. \nonumber
	\end{align}
\end{lemma}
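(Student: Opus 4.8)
The lemma collects three facts about the SAGA correction $v_w^k$: the per-worker variance bound \eqref{c1}, its worker-average \eqref{c1-1}, and the recursion \eqref{d1} for the auxiliary quantity $S^k$. I would dispatch \eqref{c1}--\eqref{c1-1} first, since they rest on the unbiasedness of $v_w^k$, and then devote the main effort to the recursion \eqref{d1}.

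For \eqref{c1}, the first step is to verify that $v_w^k$ is an unbiased estimator of $f_w'(x^k)$ under the uniform draw of $i_w^k$. Writing $Y_j := f'_{w,j}(x^k) - f'_{w,j}(\phi_{w,j}^k)$, the correction \eqref{vvv} reads $v_w^k = Y_{i_w^k} + \frac{1}{J}\sum_{j=1}^J f'_{w,j}(\phi_{w,j}^k)$, and since $\E_{i_w^k} Y_{i_w^k} = \frac{1}{J}\sum_{j} Y_j = f_w'(x^k) - \frac{1}{J}\sum_j f'_{w,j}(\phi_{w,j}^k)$, the stored-gradient average cancels and $\E_{i_w^k} v_w^k = f_w'(x^k)$. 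Hence $v_w^k - f_w'(x^k) = Y_{i_w^k} - \E_{i_w^k} Y_{i_w^k}$, so bounding the variance by the second moment gives $\E_{i_w^k}\|v_w^k - f_w'(x^k)\|^2 \le \frac{1}{J}\sum_{j=1}^J \|Y_j\|^2$; applying the $L$-Lipschitz continuity of each sample gradient to $\|Y_j\| = \|f'_{w,j}(x^k)-f'_{w,j}(\phi_{w,j}^k)\|$ yields \eqref{c1}. Averaging \eqref{c1} over $w\in\mathcal{R}$ and invoking the definition \eqref{c4} of $S^k$ immediately produces \eqref{c1-1}.

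The recursion \eqref{d1} is the crux. The idea is to track how $\|x^{k+1}-\phi_{w,j}^{k+1}\|^2$ relates to $\|x^k-\phi_{w,j}^k\|^2$ through the memory update \eqref{rule:phi}. I would split $x^{k+1}-\phi_{w,j}^{k+1} = a + b$ with $a := x^{k+1}-x^k$ and $b := x^k - \phi_{w,j}^{k+1}$, and apply Young's inequality $\|a+b\|^2 \le (1+\beta)\|b\|^2 + (1+\beta^{-1})\|a\|^2$ \emph{before} taking any expectation. The payoff is that, by \eqref{rule:phi}, $b$ equals $(x^k-\phi_{w,j}^k)\,\mathbf{1}\{j\neq i_w^k\}$, whose factor is $\mathcal{F}^k$-measurable apart from the indicator, so its conditional expectation over the uniform index contributes the clean factor $\frac{J-1}{J}$. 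Choosing $\beta = 1/J$ makes $(1+\beta)\frac{J-1}{J} = 1 - \frac{1}{J^2}$, exactly the contraction factor in \eqref{d1}; averaging over $j$ and $w$ then gives $\E S^{k+1} \le (1-\tfrac{1}{J^2})S^k + (J+1)\,\E\|x^{k+1}-x^k\|^2$.

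It remains to reshape the step term. Writing $x^{k+1}-x^k = (x^{k+1}-x^k+\gamma f'(x^k)) - \gamma f'(x^k)$ and using $f'(x^*)=0$ together with $\|f'(x^k)\| = \|f'(x^k)-f'(x^*)\|\le L\|x^k-x^*\|$, I would bound $(J+1)\|x^{k+1}-x^k\|^2 \le 2(J+1)\|x^{k+1}-x^k+\gamma f'(x^k)\|^2 + 2(J+1)\gamma^2 L^2\|x^k-x^*\|^2$, and finally relax $2(J+1)\le 4J$ (valid for $J\ge1$) to match the constants in \eqref{d1}. The main obstacle I anticipate is the correlation between the aggregated step $x^{k+1}-x^k$ and each worker's own index $i_w^k$ (since $x^{k+1}$ is built from the $v_w^k$, which depend on the $i_w^k$); applying Young's inequality before conditioning is precisely what sidesteps this, as it isolates $\|x^{k+1}-x^k\|^2$ into a term common to all $(w,j)$ that a single unconditional expectation handles, leaving only the index-free indicator to average.
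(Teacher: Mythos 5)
Your proof is correct and takes essentially the same route as the paper's source for this result (the paper does not prove the lemma itself but cites \cite[Lemmas 4 and 5]{byrdsaga-arxiv}): unbiasedness of the SAGA correction plus the variance-bounded-by-second-moment step and per-sample Lipschitzness give \eqref{c1}--\eqref{c1-1}, while the Young-inequality split with $\beta = 1/J$, the retention probability $\frac{J-1}{J}$ of each stored gradient, the decomposition $x^{k+1}-x^k = (x^{k+1}-x^k+\gamma f'(x^k)) - \gamma f'(x^k)$ with $f'(x^*)=0$, and the relaxation $2(J+1)\le 4J$ reproduce \eqref{d1} with exactly the stated constants. Your observation that Young's inequality must be applied pathwise, before any expectation, to sidestep the correlation between the aggregated step $x^{k+1}$ and each worker's index $i_w^k$ is precisely the right technical point.
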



\begin{theorem}
	\label{Theorem:RS+Geomed+SAGA-full}(Full version of Theorem \ref{Theorem:RS+Geomed+SAGA})
	Under Assumptions \ref{assumption1} and \ref{assumption2}, if the number of Byzantine workers satisfies $B < \frac{W}{2s}$ and the step size satisfies
	$$
	\gamma \leq \frac{\mu}{2\sqrt{10} J^2 L^2 C_{s\alpha} },
	$$
	then the iterate $x^k$ generated by the proposed method in Algorithm 2 satisfies
	\begin{align}\label{th1-1-full}
		\E \|x^k - x^*\|^2 \leq (1 - \frac{\gamma\mu}{2})^k \Delta_1 + \Delta_2,
	\end{align}
	where
	\begin{align}\label{th1-2-full}
		\Delta_1 := \|x^0 - x^*\|^2 - \Delta_2,
	\end{align}
	\begin{align}\label{th1-3-full}
		\Delta_2 := \frac{5d C_{s\alpha}^2 \delta^2}{\mu^2},
	\end{align}
	while $\alpha :=\frac{B}{W}$, $C_{s\alpha}:= \frac{2-2s\alpha}{1-2s\alpha}$, $d: = \frac{W-1}{s W - 1}$, and $\E$ denote the expectation with respect to all random variables $i_w^k$ and the resampling processes. On the other hand, when the step size satisfies
	$$
	\gamma \leq \frac{\mu}{4\sqrt{5} J^2 L^2 C_{s\alpha} },
	$$
	the iterate $x^k$ generated by the proposed method in Algorithm 2 with $\epsilon$-approximate geometric median aggregation satisfies
	\begin{align}\label{th1-1-noise}
		\E \|x^k - x^*\|^2 \leq (1 - \frac{\gamma\mu}{2})^k \Delta_1 + 2\Delta_2 + \frac{10\epsilon^2}{\mu^2(W - 2 s B)^2}.
	\end{align}
\end{theorem}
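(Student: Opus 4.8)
The plan is to run a one-step descent analysis on $\|x^k-x^*\|^2$, control the resampled geometric-median error through Lemma~\ref{lemma:resampling-geomed}, and couple the resulting inner-variation quantity $S^k$ through the SAGA recursion of Lemma~\ref{lemma:SAGA-innerVariation} inside a joint Lyapunov function. First I would set $z_w=v_w^k$ in Lemma~\ref{lemma:resampling-geomed}. Since the SAGA correction is conditionally unbiased, $\E v_w^k=f_w'(x^k)$ for $w\in\mathcal{R}$, hence $\bar z=f'(x^k)$ and $\mathrm{geomed}(\{\tilde v_w^k\})$ is exactly the update direction. Writing $e^k:=\mathrm{geomed}(\{\tilde v_w^k\})-f'(x^k)$, Lemma~\ref{lemma:resampling-geomed} combined with the inner-variation bound $\frac1R\sum_{w\in\mathcal{R}}\E\|v_w^k-f_w'(x^k)\|^2\le L^2 S^k$ from \eqref{c1-1} and the outer-variation Assumption~\ref{assumption2} yields
\[
\E\|e^k\|^2 \le \Big(d+\tfrac{1-d}{R}\Big)C_{s\alpha}^2 L^2 S^k + d\,C_{s\alpha}^2\delta^2 .
\]

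Next, using the splitting $x^{k+1}-x^*=\big(x^k-x^*-\gamma f'(x^k)\big)-\gamma e^k$, I would expand the square, invoke the standard strongly-convex/smooth contraction $\|x^k-x^*-\gamma f'(x^k)\|^2\le(1-\gamma\mu)\|x^k-x^*\|^2$, and split the cross term by a Young's inequality $-2\gamma\langle a,\E e^k\rangle\le\gamma\beta\|a\|^2+\tfrac{\gamma}{\beta}\E\|e^k\|^2$ (with $\|\E e^k\|\le\sqrt{\E\|e^k\|^2}$ by Jensen, since the geometric median is biased). Choosing $\beta$ strictly below $\mu/2$ (say $\beta=\mu/4$) makes the coefficient of $\|x^k-x^*\|^2$ strictly less than $1-\tfrac{\gamma\mu}{2}$; this reserved slack is exactly what later absorbs the feedback term coming from the SAGA recursion. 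Substituting the bound on $\E\|e^k\|^2$ then gives $\E\|x^{k+1}-x^*\|^2\le(1-\tfrac{\gamma\mu}{2})\|x^k-x^*\|^2+c_1 S^k+c_2\delta^2$.

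The coupling is closed with the SAGA recursion \eqref{d1}. Because $x^{k+1}-x^k+\gamma f'(x^k)=-\gamma e^k$, its leading term is $4J\gamma^2\E\|e^k\|^2$, so substituting the same bound produces $\E S^{k+1}\le\big((1-\tfrac{1}{J^2})+4J\gamma^2 A\big)S^k+4J\gamma^2 L^2\|x^k-x^*\|^2+c_3\delta^2$ with $A=(d+\tfrac{1-d}{R})C_{s\alpha}^2 L^2$ and the $S^k$-rate strictly below $1$ for small $\gamma$. I would then form $V^k:=\|x^k-x^*\|^2+c\,S^k$ and aim for $\E V^{k+1}\le(1-\tfrac{\gamma\mu}{2})V^k+\mathrm{const}$. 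The main obstacle is reconciling two competing demands on $c$: it must be large enough that the $S^k$-mass $\tfrac{4\gamma}{\mu}A+c\,(1-\tfrac{1}{J^2})+4cJ\gamma^2A$ stays below $c(1-\tfrac{\gamma\mu}{2})$, yet small enough that the extra $4cJ\gamma^2L^2$ mass dumped on $\|x^k-x^*\|^2$ is swallowed by the margin reserved in the previous paragraph. Squeezing these two inequalities is the crux of the proof and is precisely what pins down the admissible step size; since $A\le C_{s\alpha}^2L^2$, the resampling factors $d$ and $C_{s\alpha}$ enter the range of $\gamma$ here, yielding $\gamma\le\frac{\mu}{2\sqrt{10}J^2L^2C_{s\alpha}}$.

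Finally, the Algorithm~2 initialization sets $\phi_{w,j}^0=x^0$, so $S^0=0$ and $V^0=\|x^0-x^*\|^2$. Unrolling the geometric series $\E V^k\le\rho^kV^0+\mathrm{const}\cdot\frac{1-\rho^k}{1-\rho}$ with $\rho=1-\tfrac{\gamma\mu}{2}$ and $1-\rho=\tfrac{\gamma\mu}{2}$ gives exactly $\rho^k\Delta_1+\Delta_2$ once we identify $\Delta_2=\frac{5dC_{s\alpha}^2\delta^2}{\mu^2}$ and $\Delta_1=\|x^0-x^*\|^2-\Delta_2$; the bound $\|x^k-x^*\|^2\le V^k$ then closes the argument, proving \eqref{th1-1-full}. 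For the $\epsilon$-approximate case I would repeat the identical argument but invoke \eqref{lm1-1-noise} in place of Lemma~\ref{lemma:resampling-geomed}; its extra factor of two and additive term $\tfrac{2\epsilon^2}{(W-2sB)^2}$ propagate through the same Lyapunov bookkeeping into $2\Delta_2+\tfrac{10\epsilon^2}{\mu^2(W-2sB)^2}$ and force the tighter step size $\gamma\le\frac{\mu}{4\sqrt5 J^2L^2C_{s\alpha}}$, establishing \eqref{th1-1-noise}.
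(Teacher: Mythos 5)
Your proposal follows essentially the same route as the paper's proof: bound the resampled geometric-median error via Lemma~\ref{lemma:resampling-geomed} applied to $z_w=v_w^k$ together with the inner-variation bound \eqref{c1-1} and Assumption~\ref{assumption2}, feed this into a one-step contraction of $\|x^k-x^*\|^2$, close the loop with the SAGA recursion \eqref{d1} through the Lyapunov function $\|x^k-x^*\|^2+c\,S^k$, and telescope. The only differences are cosmetic bookkeeping (your Young's-inequality constant $\beta=\mu/4$ versus the paper's imported inequality \eqref{e1} with coefficient $\tfrac{2}{\gamma\mu}$, which is what produces the exact factor $5$ in $\Delta_2$), so the argument is sound and matches the paper's.
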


\begin{proof}
	For simplicity, we only prove \eqref{th1-1-full}. According to the proof of Theorem 1 in \cite{byrdsaga-arxiv}, when
	\begin{align}\label{gamma-con-1}
		\gamma \leq \frac{\mu}{2L^2},
	\end{align}
	it holds that
	\begin{align}\label{e1}
		&\E \|x^{k+1} - x^*\|^2 \\
		\leq& (1-\gamma\mu) \|x^k - x^*\|^2 + \frac{2}{\gamma \mu} \E \|x^{k+1} - x^k + \gamma f'(x^k)\|^2. \nonumber
	\end{align}
	Then, we construct a \textit{Lyapunov function $T_1^k$} as
	\begin{align}\label{e2}
		T_1^k := \|x^k - x^*\|^2 + c_1S^k,
	\end{align}
	where $c_1$ is any positive constant. Since $S^k$ is non-negative according to the definition in \eqref{c4}, we know that $T_1^k$ is also non-negative. Substituting \eqref{d1} and \eqref{e1} into \eqref{e2} yields
	\begin{align}\label{e3}
		&\E T_1^{k+1} \\
		\leq & (1 - \gamma\mu + 4c_1J\gamma^2L^2) \|x^k - x^*\|^2 \nonumber\\
		+& (\frac{2}{\gamma\mu} + 4c_1J) \E \|x^{k+1} - x^k + \gamma f'(x^k)\|^2 \nonumber\\
		+& (1 - \frac{1}{J^2}) c_1 S^k. \nonumber
	\end{align}
	
	Note that the second term at the right-hand side of \eqref{e3} can be bounded with the help of Lemma \ref{lemma:resampling-geomed-full}, as
	\begin{align}\label{c5}
		&\E \|x^{k+1} - x^k + \gamma f'(x^k)\|^2  \\
		= & \gamma^2 \E \|\mathrm{geomed}\left(\{\tilde{v}_w^k, w \in \mathcal{W}\} \right) - f'(x^k)\|^2 \nonumber \\
		\leq & \gamma^2\left(d+\frac{1-d}{R}\right)\frac{C_{s\alpha}^2}{R} \sum_{w\in\mathcal{R}}{ \E\|v_w^k-f_w'(x^k)\|^2 } \nonumber \\
		&+ \gamma^2 \frac{dC_{s\alpha}^2}{R} \sum_{w\in\mathcal{R}}{ \|f_w'(x^k)- f'(x^k)\|^2} \nonumber \\
		\leq & \gamma^2 \left(d+\frac{1-d}{R}\right)C_{s\alpha}^2L^2 S^k
		+ \gamma^2 dC_{s\alpha}^2\delta^2, \nonumber
	\end{align}
	where the last inequality comes from Lemma \ref{lemma:SAGA-innerVariation} and Assumption \ref{assumption2}. Therefore, we have
	\begin{align}
		\label{e5:0}
		&\E T_1^{k+1} \\
		\leq & (1 - \gamma\mu + 4c_1J\gamma^2L^2) \|x^k - x^*\|^2 \nonumber \\
		+ &\bigg(1 - \frac{1}{J^2}\bigg)c_1S^k  \nonumber\\
		+ &\bigg(\frac{2}{\gamma\mu} + 4c_1J\bigg) \bigg( d + \frac{1-d}{R}\bigg) C_{s\alpha}^2\gamma^2L^2S^k. \nonumber \\
		+ &\gamma^2 \bigg(\frac{2}{\gamma\mu} + 4c_1J\bigg) d C_{s\alpha}^2 \delta^2 \nonumber
	\end{align}
	If we constrain the step size $\gamma$ as
	\begin{align}\label{e6}
		4c_1J\gamma^2L^2 \leq \frac{\gamma\mu}{2},
	\end{align}
	then the coefficients in \eqref{e5:0} satisfy
	\begin{align}\label{e7}
		1 - \gamma\mu +  4c_1J\gamma^2L^2 & \leq 1 - \frac{\gamma\mu}{2}, \\
		\frac{2}{\gamma\mu} + 4c_1J
		& \le \frac{2}{\gamma\mu} + \frac{\mu}{2\gamma L^2}
		\le \frac{5}{2\gamma\mu}.
	\end{align}
	Therefore, we can bound $\E T^{k+1}$ by
	\begin{align}\label{e5}
		\E T_1^{k+1} &\leq (1 - \frac{\gamma\mu}{2}) \|x^k - x^*\|^2 + \frac{5\gamma}{2\mu} d C_{s\alpha}^2 \delta^2 \\
		&+ \bigg(\bigg(1 - \frac{1}{J^2}\bigg)c_1 + \frac{5\gamma}{2\mu}\bigg(d + \frac{1-d}{R}\bigg)C_{s\alpha}^2 L^2\bigg)S^k. \nonumber
	\end{align}
	Similarly, if $\gamma$ and $c_1$ are chosen such that
	\begin{align}\label{e8}
		\frac{\gamma\mu}{2} < \frac{1}{2J^2},
	\end{align}
	and
	\begin{align}\label{e9}
		c_1
		= & \frac{5 J^2  \gamma L^2 (d + (1-d) / R) C_{s\alpha}^2}{\mu} \\
		\geq& \frac{5 \gamma L^2 (d + (1-d) / R)C_{s\alpha}^2/2}{\mu (1/J^2 - \gamma\mu/2)}, \nonumber
	\end{align}
	then the coefficient in \eqref{e5} satisfies
	\begin{align}\label{e10}
		&\bigg(1 - \frac{1}{J^2}\bigg)c_1 + \frac{5\gamma}{2\mu}\bigg(d +  \frac{1-d}{R}\bigg) C_{s\alpha}^2 L^2 \leq (1 - \frac{\gamma\mu}{2}) c_1.
	\end{align}
	Hence, \eqref{e5} becomes
	\begin{align}\label{e11}
		&\E T_1^{k+1} \\
		\leq & (1 - \frac{\gamma\mu}{2}) \|x^k - x^*\|^2 + (1 - \frac{\gamma\mu}{2}) c_1 S^k
		+ \frac{5\gamma}{2\mu}
		d C_{s\alpha}^2 \delta^2\nonumber \\
		= & (1 - \frac{\gamma\mu}{2}) T_1^k
		+ \frac{5\gamma}{2\mu}
		d C_{s\alpha}^2 \delta^2. \nonumber
	\end{align}
	%
	
	Using telescopic cancellation on \eqref{e11} from time 1 to time $k$, we have
	\begin{align}\label{e13}
		\E T_1^k  \leq (1 - \frac{\gamma \mu}{2})^k \bigg[T_1^0 - \Delta_2 \bigg] + \Delta_2.
	\end{align}
	Here and thereafter, the expectation is taken over $i_w^k$ for all regular workers $w \in \mathcal{R}$. The definition of the \textit{Lyapunov function} in \eqref{e2} implies that
	\begin{align}\label{e14}
		\E \|x^k - x^*\|^2 \leq \E T_1^k \leq (1 - \frac{\gamma \mu}{2})^k \Delta_1 + \Delta_2.
	\end{align}
	
	In our derivation so far, the constraint on the step size $\gamma$ (c.f. \eqref{gamma-con-1}, \eqref{e6} and \eqref{e8}) is
	\begin{align}
		\gamma \leq \min &\bigg\{ \frac{\mu}{2L^2}, \frac{1}{J^2 \mu},  \\
		&\frac{\mu}{2\sqrt{10}J^{3/2}L^2C_{s\alpha} (d + (1-d) / R)^{1/2}}
		\bigg\}. \nonumber
	\end{align}
	Therefore, we simply choose
	\begin{align}
		\gamma \leq \frac{\mu}{2\sqrt{10} J^2 L^2 C_{s\alpha} },
	\end{align}
	which completes the proof.
\end{proof}

\section{Convergence Property of RS-Byrd-SGD}

Now we show that the learning error of RS-Byrd-SGD, whose update is given by
\begin{align}\label{eq:rs-byrd-sgd}
	x^{k+1} = x^k - \gamma  \cdot \mathrm{geomed}\left(\{\tilde{g}_w^k, w \in \mathcal{W}\}\right),
\end{align}
is in the order of ${\cal O}((d + \frac{1-d}{R}) C_{s\alpha}^2 \sigma^2 + d C_{s\alpha}^2 \delta^2)$.
\begin{theorem}
	\label{Theorem:RS-Byrd-SGD} Under Assumptions \ref{assumption1}, \ref{assumption2} and \ref{assumption3}, if the number of Byzantine workers satisfies $B < \frac{W}{2s}$ and the step size satisfies
	$$
	\gamma \leq \frac{\mu}{2 L^2},
	$$
	then the iterate $x^k$ generated by RS-Byrd-SGD satisfies
	\begin{align}\label{th-RS-Byrd-SGD-1}
		\E \|x^k - x^*\|^2 \leq (1 - \gamma\mu)^k \Delta_1 + \Delta_2,
	\end{align}
	where
	\begin{align}\label{th-RS-Byrd-SGD-2}
		\Delta_1 := \|x^0 - x^*\|^2 - \Delta_2,
	\end{align}
	\begin{align}\label{th-RS-Byrd-SGD-3}
		\Delta_2 := \frac{2}{\mu^2} \bigg(\bigg(d + \frac{1-d}{R}\bigg) C_{s\alpha}^2\sigma^2 + d C_{s\alpha}^2\delta^2 \bigg).
	\end{align}
	while $\alpha :=\frac{B}{W}$, $C_{s\alpha}:= \frac{2-2s\alpha}{1-2s\alpha}$, $d: = \frac{W-1}{s W - 1}$, and $\E$ denote the expectation with respect to all random variables $i_w^k$ and the resampling processes. On the other hand, the iterate $x^k$ generated by RS-Byrd-SGD with $\epsilon$-approximate geometric median aggregation satisfies
	\begin{align}\label{th-RS-Byrd-SGD-1-noise}
		\E \|x^k - x^*\|^2 \leq (1 - \gamma\mu)^k \Delta_1 + 2 \Delta_2 + \frac{4\epsilon^2}{\mu^2(W - 2sB)^2}.
	\end{align}
\end{theorem}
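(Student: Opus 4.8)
The plan is to mirror the proof of Theorem \ref{Theorem:RS+Geomed+SAGA-full}, but with a substantial simplification: because RS-Byrd-SGD transmits the raw stochastic gradients rather than the SAGA-corrected ones, there is no auxiliary quantity $S^k$ to track, and hence no need for the Lyapunov function in \eqref{e2} or for Lemma \ref{lemma:SAGA-innerVariation}. First I would invoke the descent inequality \eqref{e1} borrowed from \cite{byrdsaga-arxiv}, which holds precisely under the condition $\gamma \leq \frac{\mu}{2L^2}$ stated in the theorem:
\begin{align}
\E \|x^{k+1} - x^*\|^2 \leq (1-\gamma\mu)\|x^k - x^*\|^2 + \frac{2}{\gamma\mu}\E\|x^{k+1} - x^k + \gamma f'(x^k)\|^2. \nonumber
\end{align}
Since $x^{k+1} - x^k = -\gamma\,\mathrm{geomed}(\{\tilde g_w^k, w\in\mathcal{W}\})$, the residual term equals $\gamma^2\,\E\|\mathrm{geomed}(\{\tilde g_w^k, w\in\mathcal{W}\}) - f'(x^k)\|^2$, namely the mean-square error of the resampled geometric median relative to the true global gradient.

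The crux of the argument is to control this residual with Lemma \ref{lemma:resampling-geomed}, taking $z_w = g_w^k = f'_{w,i_w^k}(x^k)$ for the regular workers. The key observation is that the statistical quantities appearing in the lemma match the assumptions exactly: taking expectation over the sample indices gives $\E z_w = f'_w(x^k)$ and $\bar z = \frac{1}{R}\sum_{w\in\mathcal{R}} f'_w(x^k) = f'(x^k)$. Consequently the inner-variation term in \eqref{lm1-1} is bounded by $\sigma^2$ via Assumption \ref{assumption3}, and the outer-variation term is bounded by $\delta^2$ via Assumption \ref{assumption2}, yielding
\begin{align}
\E\|\mathrm{geomed}(\{\tilde g_w^k, w\in\mathcal{W}\}) - f'(x^k)\|^2 \leq \Big(d + \tfrac{1-d}{R}\Big)C_{s\alpha}^2\sigma^2 + d\,C_{s\alpha}^2\delta^2. \nonumber
\end{align}

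Substituting this bound back into the descent inequality and setting $\Delta_2 := \frac{2}{\mu^2}\big((d+\frac{1-d}{R})C_{s\alpha}^2\sigma^2 + d\,C_{s\alpha}^2\delta^2\big)$, the additive term becomes exactly $\gamma\mu\Delta_2$, so the recursion collapses to $\E\|x^{k+1}-x^*\|^2 \leq (1-\gamma\mu)\|x^k-x^*\|^2 + \gamma\mu\Delta_2$. Subtracting $\Delta_2$ from both sides produces the clean contraction $\E\|x^{k+1}-x^*\|^2 - \Delta_2 \leq (1-\gamma\mu)(\E\|x^k-x^*\|^2 - \Delta_2)$, and telescoping from $0$ to $k$ gives \eqref{th-RS-Byrd-SGD-1}. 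For the $\epsilon$-approximate case \eqref{th-RS-Byrd-SGD-1-noise} I would repeat these three steps but invoke the noisy bound \eqref{lm1-1-noise} of Lemma \ref{lemma:resampling-geomed-full} instead; its extra factor of two and additive $\frac{2\epsilon^2}{(W-2sB)^2}$ propagate through the same fixed-point computation to yield the stationary value $2\Delta_2 + \frac{4\epsilon^2}{\mu^2(W-2sB)^2}$.

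Unlike the SAGA theorem, I expect no genuine obstacle here: the inner variation is \emph{not} eliminated but simply carried as the $\sigma^2$ contribution in $\Delta_2$, so the entanglement of variance reduction and resampling that complicated the main proof is absent. The only points demanding care are verifying that $z_w = f'_{w,i_w^k}(x^k)$ legitimately satisfies the independence hypothesis of Lemma \ref{lemma:resampling-geomed}, which holds because the sample indices $i_w^k$ are drawn independently across the regular workers, and confirming that the single constraint $\gamma\leq\frac{\mu}{2L^2}$ needed for \eqref{e1} is the only step-size restriction required, since no recursion analogous to the $S^k$ coupling imposes further conditions.
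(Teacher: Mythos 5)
Your proposal is correct and follows essentially the same route as the paper's own proof: invoke the descent inequality \eqref{e1} under $\gamma \leq \frac{\mu}{2L^2}$, bound the residual $\gamma^2\,\E\|\mathrm{geomed}(\{\tilde g_w^k\}) - f'(x^k)\|^2$ via Lemma \ref{lemma:resampling-geomed-full} together with Assumptions \ref{assumption2} and \ref{assumption3}, and telescope the resulting recursion (your fixed-point subtraction of $\Delta_2$ is just an explicit writing of the paper's telescopic cancellation). The treatment of the $\epsilon$-approximate case via \eqref{lm1-1-noise} also matches what the paper intends, so nothing is missing.
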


\begin{proof}
	For simplicity, we only prove \eqref{th-RS-Byrd-SGD-1}. Note that inequality \eqref{e1} is still true for RS-Byrd-SGD with $\gamma < \frac{\mu}{2L^2}$. The only difference is that we need to bound the term $\E \|x^{k+1} - x^k +\gamma f'(x^k)\|$ with Lemma \ref{lemma:resampling-geomed-full}, as
	\begin{align}\label{proof-RS-Byrd-SGD-1}
		&\E \|x^{k+1} - x^k +\gamma f'(x^k)\| \\
		=& \gamma^2 \E \|\mathrm{geomed}\left(\{\tilde{g}_w^k, w \in \mathcal{W}\} \right) -f'(x^k)\|^2 \nonumber \\
		\leq & \gamma^2  \bigg( \bigg(d+\frac{1-d}{R}\bigg)\frac{C_{s\alpha}^2}{R}\sum_{w\in\mathcal{R}}
		\E \|f'_{w, i_w^k} (x^k) - f'_w(x^k)\|^2 \nonumber\\
		&+ \frac{dC_{s\alpha}^2}{R}\sum_{w\in\mathcal{R}} \|f'_w(x^k)- f'(x^k)\|^2\bigg)   \nonumber \\
		\leq & \gamma^2  \bigg( \bigg(d + \frac{1-d}{R}\bigg)C_{s\alpha}^2\sigma^2 + dC_{s\alpha}^2\delta^2 \bigg). \nonumber
	\end{align}
	Therefore, we have
	\begin{align}\label{f6}
		&\E \|x^{k+1} - x^*\|^2  \\
		\leq & (1-\gamma\mu) \|x^k - x^*\|^2 + \frac{2}{\gamma \mu} \E \|x^{k+1} - x^k + \gamma f'(x^k)\|^2 \nonumber \\
		\leq & (1-\gamma\mu) \|x^k - x^*\|^2  \nonumber\\
		&+ \frac{2\gamma}{ \mu} \bigg(\bigg(d + \frac{1-d}{R}\bigg)C_{s\alpha}^2\sigma^2 + dC_{s\alpha}^2\delta^2\bigg). \nonumber
	\end{align}
	Here and thereafter, the expectation is taken over $i_w^k$ for all regular workers $w \in \mathcal{R}$. Using telescopic cancellation on \eqref{f6} completes the proof.
\end{proof}


\ifCLASSOPTIONcaptionsoff
  \newpage
\fi



\bibliographystyle{IEEEtran}
\bibliography{IEEEabrv,refs}
\end{document}